\newtheorem{thm}{Theorem}
\crefname{thm}{Theorem}{Theorems}
\newtheorem{assumption}{Assumption}
\crefname{asm}{Assumption}{Assumptions}
\newtheorem{lemma}{Lemma}
\newtheorem{corollary}{Corollary}
\crefname{corollary}{Corollary}{Corollaries}
\title{Visual Generation Unlocks Human-Like Reasoning through Multimodal World Models}
\author[1,2,*]{Jialong Wu}
\author[2, \dagger]{Xiaoying Zhang}
\author[2]{Hongyi Yuan}
\author[1,2,*]{Xiangcheng Zhang}
\author[1]{Tianhao Huang}
\author[1]{Changjing He}
\author[1,2,*]{Chaoyi Deng}
\author[2]{Renrui Zhang}
\author[2]{Youbin Wu}
\author[1, \dagger]{Mingsheng Long}
\affiliation[1]{Tsinghua University}
\affiliation[2]{ByteDance Seed}
\abstract{
Humans construct internal models of the world and reason by manipulating the concepts within these models. Recent advances in artificial intelligence (AI), particularly chain-of-thought (CoT) reasoning, approximate such human cognitive abilities, where world models are believed to be embedded within large language models. Expert-level performance in formal and abstract domains such as mathematics and programming has been achieved in current systems, which rely predominantly on verbal reasoning as their primary information-processing pathway. However, they still lag far behind humans in domains like physical and spatial intelligence, which require richer representations and prior knowledge. The emergence of unified multimodal models (UMMs) capable of both verbal and visual generation has therefore sparked interest in more human-like reasoning grounded in complementary multimodal pathways, though a clear consensus on their benefits has not yet been reached. From a world-model perspective, this paper presents the first principled study of when and how visual generation benefits reasoning. Our key position is the \textit{visual superiority hypothesis}: for certain tasks--particularly those grounded in the physical world--visual generation more naturally serves as world models, whereas purely verbal world models encounter bottlenecks arising from representational limitations or insufficient prior knowledge. Theoretically, we formalize internal world modeling as a core component of deliberate CoT reasoning and analyze distinctions among different forms of world models from both informativeness and knowledge aspects. Empirically, we identify and design tasks that necessitate interleaved visual-verbal CoT reasoning, constructing a new evaluation suite, \textit{VisWorld-Eval}. Through controlled experiments on a state-of-the-art UMM, we show that interleaved CoT significantly outperforms purely verbal CoT on tasks that favor visual world modeling. Conversely, it offers no clear advantage for tasks that do not require explicit visual modeling. Together, these insights and findings clarify the applicability and potential of multimodal world modeling and reasoning for more powerful, human-like multimodal AI. We publicly release our evaluation suite to facilitate further research.
}
\date{\today}
\begin{document}
\maketitle

\newpage

\section{Introduction}

\begin{figure}[tbp]
  \centering
  \includegraphics[width=0.9\linewidth]{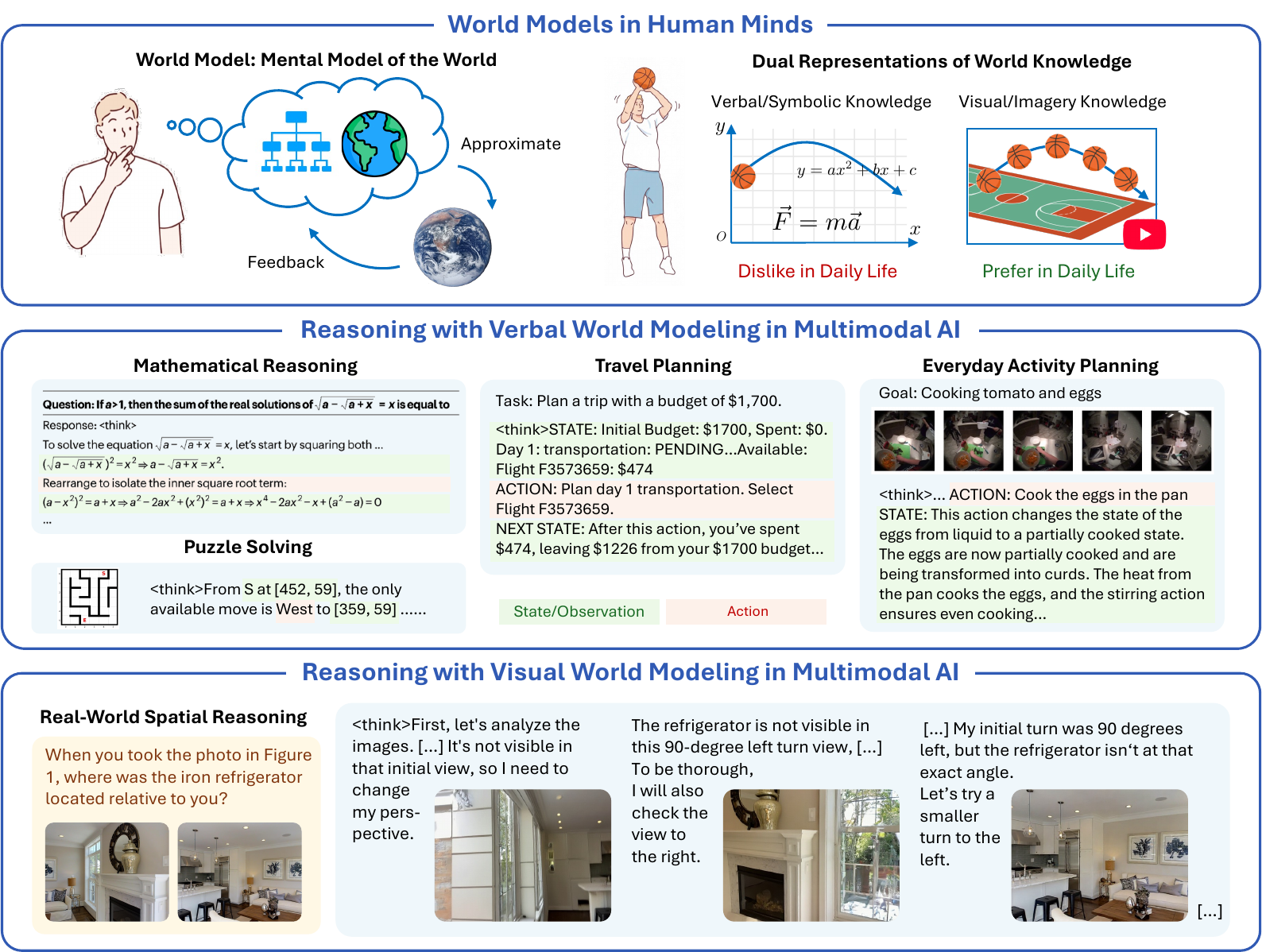}
  \caption{Overview of a world-model perspective on multimodal reasoning. (a) Humans construct mental models of the world, representing information and knowledge through two complementary channels--verbal and visual--to support reasoning, planning, and decision-making. (b) Recent advances in large language models (LLMs) and vision language models (VLMs) largely rely on verbal chain-of-thought reasoning, leveraging primarily verbal and symbolic world knowledge. (c) Unified multimodal models (UMMs) open a new paradigm by using visual generation for visual world modeling, advancing more human-like reasoning on tasks grounded in the physical world. Examples of reasoning with verbal world modeling are adapted from \citet{guo2025deepseek,du2025revisiting,chen2025planning,zhang2025agent}.}
  \label{fig:concept}
\end{figure}

Humans construct internal mental models of the external world that represent objects and concepts, along with their relationships, structures, and operational mechanisms \cite{craik1967nature, forrester1971counterintuitive}. These models support reasoning and decision-making by enabling mental simulation, allowing individuals to anticipate the outcome of actions without actually taking them  \cite{loomis1991mental}. For example, if a glass of water is spilled on the table, people can rapidly mentally simulate the ensuing events: the water falling downward, spreading across the surface, and potentially dripping onto the floor. Such predictions lead them to quickly move valuable items away or reach for a towel. Beyond physical systems, mental models also extend to any domain where relational structures can be simulated, such as mathematics and logic \cite{johnson1983mental, lakoff2000mathematics}, making them fundamental to how humans understand and interact with all aspects of the world. 

Cross-disciplinary researchers in philosophy, psychology, cognitive science, and related fields have a long history of developing computational models of human mental models \cite{norman2014some}. Among them, artificial intelligence (AI) shares a core ambition of building machines that reason like people. Although debates remain, recent breakthroughs, especially in large language models (LLMs) and chain-of-thought (CoT) reasoning, have made a substantial step towards approximating human reasoning grounded in \textit{mental models of the world}, often referred to as \textbf{world models} \cite{ha2018world, lecun2022path} in the AI literature. During chain-of-thought reasoning, LLMs explore, reflect, and backtrack within the structured solution space, guided by world knowledge acquired through large-scale pre-training. These capabilities have already driven progress in diverse domains, including programming \cite{guo2025deepseek}, mathematics \cite{trinh2024solving, guo2025deepseek}, scientific discovery \cite{swanson2025virtual}, clinical medicine \cite{tu2025towards}, and robotics \cite{mon2025embodied}.

Such reasoning capabilities have also been extended to multimodal AI systems, particularly vision language models (VLMs) \cite{hurst2024gpt, bai2025qwen2, guo2025seed1, yao2025efficient}. These systems typically incorporate visual inputs by aligning visual representations with the embedding space of LLMs, resulting in reasoning that remains primarily constrained to a linguistic space. In contrast, human mental models operate over multiple forms of mental representations. Dual-coding theory \cite{paivio1990mental} suggests that the mind processes information through two complementary codes: verbal and imagery (particularly visual) representations. These pathways can function independently but often collaborate to support reasoning. Indeed, visual imagery has been shown to have advantages over words in memory encoding and retrieval \cite{landy2007abstract}; and individuals with aphantasia, who lack the ability to visualize mental imagery, exhibit worse performance on tasks such as visual search \cite{monzel2024s}. These evidence from psychology and cognitive science therefore suggest that the absence of a dedicated visual information pathway may explain why current multimodal AI systems excel in formal and abstract domains dominated by verbal world knowledge, yet continue to fall far short of human performance on tasks involving physical and spatial reasoning \cite{schulze2025visual, cai2025has}, which fundamentally depend on \textbf{visual world modeling}.

Next-generation multimodal AI systems are evolving to be built upon unified multimodal models (UMMs) \cite{team2024chameleon, wu2025janus, wang2024emu3, deng2025emerging}, which seamlessly integrate both verbal and visual generation capabilities. The newly introduced visual generation component offers the potential to explicitly realize visual world modeling, a critical element of \textbf{multimodal world models} in human-like reasoning that current systems largely lack. This naturally makes us ponder: \textit{Can current UMMs truly leverage their visual generation capability to enhance reasoning and thereby narrow the performance gap between multimodal AI and humans?} A growing body of preliminary research \cite{li2025imagine, zou2025uni, liang2025rover, zhou2025visualizing, gu2025thinkmorph} has begun exploring this question from different perspectives. However, the findings so far remain inconclusive. Reported empirical results are mixed, showing no consistent trends that visual generation reliably improves reasoning performance. Moreover, the evaluation tasks used in current studies are designed heuristically, lacking a principled basis for understanding when and how visual generation can meaningfully contribute to multimodal reasoning.

In this paper, we present the first principled study of when and how visual generation benefits reasoning from a \textbf{world-model perspective} (see Figure~\ref{fig:concept}), making both theoretical and empirical contributions.

Theoretically, we rigorously bridge the concepts of world models and reasoning. (1) \textbf{World model formulations}: We formulate multimodal world models to approximate the underlying \textit{multi-observable Markov decision processes} (MOMDP) of tasks, and define two fundamental capabilities of world models, namely \textit{world reconstruction} and \textit{world simulation}. (2) \textbf{World model-based reasoning}: To realize world models for reasoning, we introduce three reasoning formulations. Two rely solely on verbal CoTs through \textit{implicit or verbal world modeling}, while the third interleaves verbal and visual CoTs that explicitly incorporate visual generation as a form of \textit{visual world modeling}. (3) \textbf{The visual superiority hypothesis}: Under this framework, we analyze the distinctions among different world models, highlighting the richer informativeness and complementary prior knowledge afforded by visual world modeling. These insights motivate our central hypothesis that visual world modeling is superior for certain tasks, particularly those grounded in the physical world.

Empirically, we validate these insights through a series of controlled experiments. (4) \textbf{The VisWorld-Eval suite}: We identify and design tasks that specifically isolate and demand each atomic world model capability, forming a new evaluation suite to facilitate future research. This suite, \textit{VisWorld-Eval}, collects seven tasks spanning both synthetic and real-world domains. (5) \textbf{Empirical evaluation}: Experiments with a state-of-the-art UMM \cite{deng2025emerging} on VisWorld-Eval reveal findings consistent with our insights and theoretical analysis. In tasks where verbal world modeling suffers from representational bottlenecks or insufficient prior knowledge, interleaved CoT delivers substantial performance improvements. By contrast, it offers no clear advantages in tasks such as mazes and Sokoban, whose simple states do not require explicit visual world modeling. We further conduct dedicated analyses, including evidence revealing emergent implicit world modeling in the maze task.

We hope this work provides early evidence for the central role of multimodal world models in general-purpose AI, in which complementary verbal and visual knowledge emerge from generative modeling across modalities, with the latter being especially valuable for bringing human-like intelligence into the physical world.

\section{Related Work}

\textbf{World models.} The field of world models is rapidly evolving, yet remains far from reaching consensus on definitions or methodologies. Although psychology and cognitive science suggest that human mental models rely on compact representations that discard irrelevant details, how to scale approaches capable of learning such abstract representations \cite{schrittwieser2020mastering, hansen2022temporal, lecun2022path} to arbitrary domains and modalities is still unclear. Consequently, most current techniques preserve complete information of observations, either through reconstructable latent representations  \cite{ha2018world, hafner2025mastering} or directly at the level of raw data. Prominent examples include modern video generation world models \cite{genie3, agarwal2025cosmos, alonso2024diffusion, wu2024ivideogpt} which capture concrete pixel-level dynamics. In contrast, language inherently provides a higher level of abstraction, making it more similar to human mental representations \cite{wang2024can, wu2025rlvr, wang2025vagen, yin2025spatial, chen2025planning}. This motivates the promise of unified multimodal models that generate both languages and visuals as a new direction for building more human-like world models.

\textbf{Unified multimodal models.} Multmodal understanding \cite{hurst2024gpt, bai2025qwen2, guo2025seed1} and visual generation \cite{rombach2022high, seedream2025seedream} have long developed in isolation. Recently, there has been growing interest in integrating these two capabilities into a single unified model. This can be straightforwardly achieved by forwarding the representations of vision language models to an external visual generation module \cite{tong2025metamorph, pan2025transfer}. A more unified approach is to model both language and visual modalities within a single backbone. While language is predominantly modeled through autoregressive next-token prediction, the design space of visual modalities spans a wide spectrum, from discrete tokenization with autoregressive \cite{wang2024emu3, team2024chameleon, wu2025janus} or masked modeling \cite{xie2024show, guo2025can}, to continuous tokenization with diffusion or flow-based modeling \cite{zhou2024transfusion, ma2025janusflow, deng2025emerging}. Among these efforts, BAGEL \cite{deng2025emerging} is one of the most widely adopted open-source models achieving state-of-the-art performance. Despite substantial progress in building unified multimodal models (UMMs), existing evaluations still primarily assess their understanding and generation capabilities separately. One widely recognized advantage of UMMs lies in leveraging reasoning abilities of handling complex instructions to enhance visual generation or editing \cite{zhao2025envisioning, guo2025thinking}. Yet when and how visual generation, in turn, enhances reasoning remains insufficiently explored, lacking solid empirical evidence and community consensus.

\textbf{Benchmarking visual generation for reasoning.} This paper contributes to a growing line of research on visual generation for reasoning. RealUnify \cite{shi2025realunify} and Uni-MMMU \cite{zou2025uni} design tasks in which generation is expected to enhance reasoning, but report mixed results without revealing clear trends regarding the benefits of visual generation. ROVER \cite{liang2025rover} reveals fundamental limitations of current models in generating meaningful visual reasoning steps, often resulting in minimal or even negative gains in final accuracy. In contrast, MIRA \cite{zhou2025visualizing} conducts a sanity test by providing manually annotated visual cues, thereby bypassing the evaluation of visual world modeling capability. While the aforementioned works evaluate zero-shot performance, ThinkMorph \cite{gu2025thinkmorph} fine-tunes UMMs to reveal emergent reasoning behaviors but restricts each CoT to a single intermediate image, thereby not fully exploiting the potential of interleaved CoT. Our work distinguishes itself through a world-model perspective that enables a principled investigation, allowing us to both demonstrate and systematically explain when visual generation yields positive gains and when it does not.

\section{A World Model Perspective on Multimodal Reasoning}

Inspired by the aforementioned connections between human cognition and artificial intelligence, we formalize our world-model perspective on multimodal reasoning (see Figure~\ref{fig:formulation}) in this section.

\subsection{Formulation: Multiple Observations of the World}
\label{sec:momdp}

Without loss of generality, the world of a specific task can be formulated as a \textbf{multi-observable Markov decision process} (MOMDP) $\mathcal{M} = (\mathcal{S}, \mathcal{A}, p, \Phi, \mathcal{O}_{\phi}, e_{\phi})$, where $\mathcal{S}$ denotes the state space, $\mathcal{A}$ the action space, $p$ the transition function, $\Phi$ the parameter space of observation functions, $\mathcal{O}_{\phi}$ the observation space, and $e_{\phi}$ the observation function. Each $s \in \mathcal{S}$ represents the underlying state of the world, which is typically hidden and not directly observable. Instead, it can be perceived through different instantiations of observations (hereafter also referred to as \textit{views}) \cite{huh2024platonic}, given by $o = e_{\phi}(s) \in \mathcal{O}_{\phi}$, parameterized by $\phi \in \Phi$. As illustrated in Figure~\ref{fig:formulation}a, such views can span multiple modalities—for example, visual observations corresponding to different camera poses, or verbal descriptions expressed with different emphases or styles. When an action $a \in \mathcal{A}$ is applied to the current state, the world transits according to the dynamics $s^\prime \sim p(s^\prime|s, a)$ and yields new observations.

\subsection{Atomic Capabilities of World Models}

\begin{figure}[tbp]
  \centering
  \includegraphics[width=0.9\linewidth]{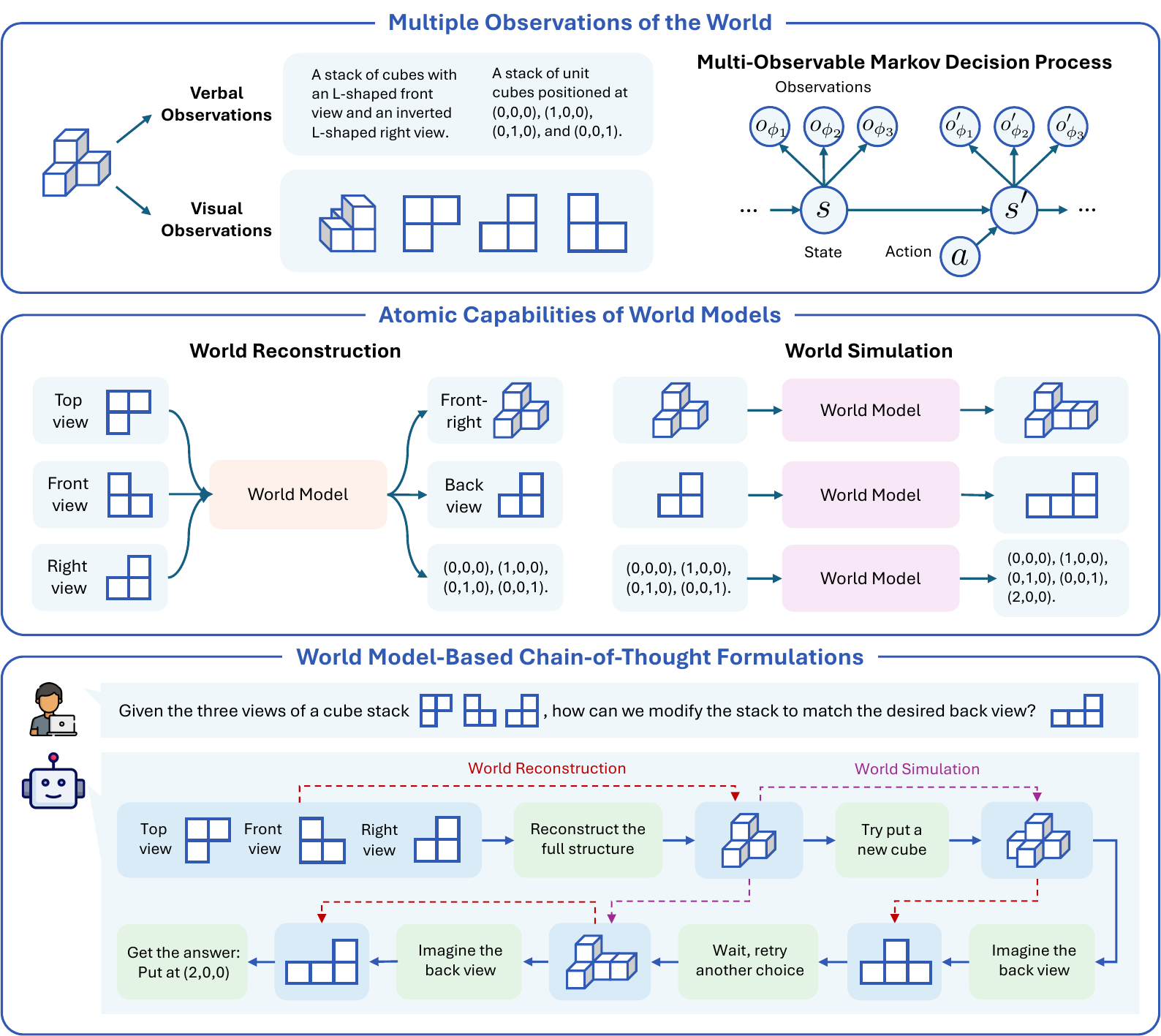}
  \caption{Theoretical formulation of the world model perspective on multimodal reasoning. (a) Observations of the same underlying world state can span multiple modalities, including verbal and visual observations, each reflecting different views or emphases. (b) Two atomic capabilities of world models are defined: \textit{world reconstruction}, which infers complete structure from partial observations and enables novel view synthesis, and \textit{world simulation}, which models dynamics to predict future observations. (c) Chain-of-thought reasoning includes internal world modeling, by explicitly maintaining an evolving sequence of observations, generated through either of the atomic world model capabilities.}
  \label{fig:formulation}
\end{figure}

A world model, analogous to human mental models, is then expected to support two fundamental capabilities \cite{lecun2022path}, illustrated in Figure~\ref{fig:formulation}b. The first is called \textbf{world reconstruction}. Humans are remarkably skilled at mentally reconstructing the structure of an environment from only a few partial observations \cite{yin2025spatial}, grounded in their prior knowledge of the world. Such mental reconstruction allows them to imagine novel views of the same underlying state, supporting skills such as mental rotation. Formally, the perception component of a world model encodes $n$ observations from limited views into an internal representation: $\hat s = \operatorname{enc}(o_{\phi_1}, \dots, o_{\phi_n}) \approx s$. This representation approximates the true state\footnote{We set aside the debate between compact and comprehensive representations. By treating abstract (e.g., sketches) and high-fidelity observations as different view specifications, this formulation allows the internal representation to flexibly adjust to the level of detail required by the desired views.}, and can then be decoded to generate an unseen observation: $\hat o_{\phi_{n+1}} = \mathrm{dec}(\hat s, \phi_{n+1}) \approx e_{\phi_{n+1}}(s)$, providing an internal "experience" of \textit{navigating} the world. In modern generative models, including UMMs, since their latent representations are not explicitly defined, the world reconstruction capability can be realized through end-to-end novel view generation: 
\begin{align}
\label{eq:world_recon}
p_\theta(o_{\phi_{n+1}}\mid o_{\phi_1}, \dots, o_{\phi_n}),
\end{align}
which implicitly learns the internal representations required to synthesize the new view. 

The second capability is \textbf{world simulation}. Humans can mentally simulate how the world evolves into the future, supporting reasoning and decision-making, either purely in their minds or with external aids such as a scratchpad. Formally, this corresponds to the prediction component of a world model, which predicts the transition of the current state and action: $\hat s^\prime \sim \operatorname{pred}(\hat s, a)$, providing an internal "experience" of \textit{interacting} with the world. Similarly, for modern generative models, this capability is more typically realized through predictions of future observations: \begin{align}
\label{eq:world_sim}
p_\theta(o_{t+1}\mid o_{\leq t}, a_{\leq t}).
\end{align}
In our new evaluation suite, we deliberately curate tasks that specifically demand each capability, allowing us to independently validate its contribution to multimodal reasoning (see Section~\ref{sec:benchmark}).

\subsection{Deliberate Reasoning with World Modeling Across Modalities}

We then formalize how world-modeling capabilities within multimodal models contribute to reasoning. Given a question $Q$ and input images $I$, the chain-of-thought reasoning process of a multimodal AI system can be expressed as a sequence of intermediate steps (or thoughts) $R = \tau_1, \tau_2, \dots, \tau_H$, followed by the answer $A$. Although this general formulation treats each reasoning step $\tau_i$ as an unconstrained, free-form operation, our world model perspective suggests that humans reason by prediction and planning, and each step inherently manipulates the underlying world observations of the problem \cite{wang2025vagen, copet2025cwm, zhang2025agent}. We therefore refine the reasoning formulation as $\tau_i = (r_i, o_i)$ to explicitly incorporate an evolving sequence of observations:
\begin{align}
    \label{eq:cot}
    R = \left(r_1, o_{1}\right), \left(r_2, o_{2}\right), \dots, \left(r_H, o_{H}\right),
\end{align}
where $r_i$\footnote{We use $i$ to index reasoning steps in order to distinguish them from the true time step $t$ of the underlying MOMDP. The twos are not generally aligned, as we may include branching and backtracking in the reasoning.} denotes a logical reasoning step based on the accumulated context, typically expressed in text, and $o_{i}$ denotes the observation generated at that step. Specifically, the input images serve as the initial observation $o_0 = I$, and subsequent observations are generated from previous reasoning and observations, by invoking atomic world modeling capabilities: world reconstruction (Eq.~\eqref{eq:world_recon}) and world simulation (Eq.~\eqref{eq:world_sim}), where reasoning steps imply actions $a$ and view transformations $\phi$, as illustrated in Figure~\ref{fig:formulation}c.

This formulation is modality-agnostic, allowing observations—and thus world modeling—to arise across various modalities. We focus specifically on verbal and visual observations, motivated by dual-coding theory in human cognition and by the fact that UMMs are equipped to generate both. This yields several concrete CoT instantiations. Specifically, \textbf{verbal world modeling} produces purely verbal CoTs, with $o_i$ as verbal descriptions, whereas \textbf{visual world modeling} produces verbal-visual interleaved CoTs, with $o_i$ as generated images. In addition, prior work has discovered that language models can implicitly learn world models with emergent internal representations of board-game states without explicit supervision \cite{li2023emergent}. Motivated by this, we also consider \textbf{implicit world modeling}, in which no explicit observation is generated ($o_i = \emptyset$)\footnote{In practice, strictly distinguishing implicit from verbal world modeling can be difficult, because there are often partial descriptions of the current state in the reasoning part $r_i$. In this work, we treat verbal world modeling as explicitly expressing world states or observations in text, such as coordinates or symbolic matrices.}.

\subsection{The Visual Superiority Hypothesis}

Contemporary LLMs and VLMs have achieved impressive performance in structured and abstract domains, such as mathematics and programming, largely driven by large-scale language-centric pre-training and verbal chain-of-thought post-training. Although these models have accumulated extensive verbal and symbolic knowledge, their understanding of the visual world remains limited when trained under purely verbal supervision. As a result, they continue to struggle with tasks grounded in basic physical and spatial intuition that even young children naturally master \cite{schulze2025visual, cai2025has}. 

Visual world modeling is therefore essential for endowing multimodal AI with complementary forms of information and knowledge. 
(1) In terms of \textit{informativeness}, while verbal and symbolic representations capture high-level semantic abstractions, they often suffer from ambiguity and representational bottlenecks. In contrast, visual observations are more concrete and information-rich, directly encoding physical properties such as motion and spatial relationships. This provides precise, fine-grained grounding for reasoning about the complex real world, particularly in spatial and physical tasks. 
(2) In terms of \textit{prior knowledge}, visual world knowledge is inherently complementary to symbolic knowledge. Humans and animals acquire much of this knowledge (e.g., physical interactions and spatial transformations) through perception, largely independent of language. Consequently, humans naturally represent and communicate such knowledge visually—for example, by sketching an approximate parabolic trajectory without performing explicit calculations. This suggests that different aspects of world knowledge are concentrated in different data modalities, and learning from large-scale generative modeling of visual data can thereby expand the effective knowledge landscape available for multimodal reasoning.

We next formalize and justify these insights through theoretical analysis, with formal statements and proofs provided in Appendix~\ref{app:theorectical}.

\textbf{Informativeness.} For notational convenience, we denote the question $Q$ as $r_0$, the input images as $o_0$, and the final answer as $r_{H+1}$. Prefixes of a CoT are defined as $R_i = (r_0, o_0, r_1, o_1, \dots, r_{i-1}, o_{i-1}), \tilde R_i = (r_0, o_0, r_1, o_1, \dots, r_{i-1}, o_{i-1}, r_i)$. We use $\mathbb{H}(\cdot)$ and $\mathbb{I}(\cdot;\cdot)$ to denote Shannon entropy and mutual information, respectively. We first establish that the end-to-end answer error admits an upper bound that naturally decomposes into reasoning and world-modeling errors.

\begin{thm}
\label{thm:kl_chain_rule}
Let $p$ denote the distribution over optimal chain-of-thoughts and answers, and let $p_\theta$ be a learned reasoning model. Then the following inequality holds:
\begin{align}
\nonumber
\operatorname{KL}(p(A \mid Q, I)\mid\mid p_\theta(A \mid Q, I)) &\leq \operatorname{KL}(p(R, A \mid Q, I)\mid\mid p_\theta(R, A \mid Q, I))\\
= &\sum_{i=1}^{H+1} \underbrace{\mathbb{E}_p\left[\operatorname{KL}(p(r_i| R_i) \mid\mid  p_\theta(r_i| R_i) )\right]}_{\textnormal{reasoning errors}} + \sum_{i=1}^{H}  \underbrace{\mathbb{E}_p\left[\operatorname{KL}(p(o_i| \tilde R_i) \mid\mid p_\theta(o_i| \tilde R_i))\right]}_{\textnormal{world-modeling errors}}.
\label{eq:decompose}
\end{align}
\end{thm}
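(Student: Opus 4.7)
The statement decomposes into (i) an inequality bounding the marginal answer-level KL by the full-trajectory KL, and (ii) an equality that expresses the joint KL as a sum of per-step conditional KLs. Both reduce to standard consequences of the KL chain rule once the autoregressive structure of $(R, A)$ given $(Q, I)$ is made explicit. The key observation is that under the natural token ordering $r_1, o_1, r_2, o_2, \dots, r_H, o_H, r_{H+1}$ (identifying $r_{H+1}$ with $A$, and noting that $r_0 = Q$, $o_0 = I$ are fixed context), the history preceding $r_i$ is exactly $R_i$ and the history preceding $o_i$ is exactly $\tilde R_i$. Hence both $p$ and $p_\theta$ admit the factorization
\begin{align*}
p(R, A \mid Q, I) = \prod_{i=1}^{H+1} p(r_i \mid R_i) \cdot \prod_{i=1}^{H} p(o_i \mid \tilde R_i),
\end{align*}
and similarly for $p_\theta$.

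For the inequality, I would apply the KL chain rule to the partition $(A, R)$:
\begin{align*}
\operatorname{KL}(p(R, A \mid Q, I) \mid\mid p_\theta(R, A \mid Q, I)) = \operatorname{KL}(p(A \mid Q, I) \mid\mid p_\theta(A \mid Q, I)) + \mathbb{E}_{A \sim p}\left[\operatorname{KL}(p(R \mid A, Q, I) \mid\mid p_\theta(R \mid A, Q, I))\right],
\end{align*}
and then drop the non-negative second term. This is essentially monotonicity of KL under marginalization --- the data-processing inequality applied to the deterministic ``keep only $A$'' map.

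For the equality, I would apply the KL chain rule directly to the autoregressive factorization. Taking the log of the ratio of the two factorizations turns the product into a sum, giving
\begin{align*}
\log \frac{p(R, A \mid Q, I)}{p_\theta(R, A \mid Q, I)} = \sum_{i=1}^{H+1} \log \frac{p(r_i \mid R_i)}{p_\theta(r_i \mid R_i)} + \sum_{i=1}^{H} \log \frac{p(o_i \mid \tilde R_i)}{p_\theta(o_i \mid \tilde R_i)}.
\end{align*}
Taking the expectation under $p(R, A \mid Q, I)$ and applying the tower property --- conditioning on $R_i$ inside the first family of summands and on $\tilde R_i$ inside the second --- each term collapses to the corresponding expected conditional KL, matching term-by-term the reasoning-error and world-modeling-error summands in the stated decomposition.

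The main obstacle is purely notational: one must verify that the prefix definitions $R_i$ and $\tilde R_i$ are consistent with the chosen autoregressive order so that every factor in the chain-rule expansion coincides with a conditional distribution appearing in the theorem, with no off-by-one shift in indices. Beyond that, both parts are one-line consequences of the KL chain rule and the non-negativity of KL, with no additional probabilistic machinery required.
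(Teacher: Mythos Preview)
Your proposal is correct and matches the paper's own proof essentially line for line: the paper invokes the data-processing inequality (marginalizing out $R$ cannot increase KL) for the inequality, and the KL chain rule applied to the autoregressive CoT factorization for the equality. Your treatment is simply a more explicit spelling-out of the same two standard steps.
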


This decomposition reveals a fundamental trade-off between the informativeness of world models for reasoning and the fidelity of the world model itself. In the case of implicit world modeling, where $o_i = \emptyset$, we get rid of the world-modeling error. However, this typically comes at the cost of increased uncertainty and learning difficulty in reasoning, as all state transitions must be implicitly encoded. Empirically, world models that explicitly track the task states, serving as verbal or visual sketchpads, are generally beneficial for reasoning. We dive into the reasoning component of Eq.~\eqref{eq:decompose} to elucidate the factors underlying these benefits.

\begin{thm}
\label{thm:mutual_info}
Let $s_i$ denote the latent states associated with the observations $o_i$. Under appropriate assumptions, the reduction in reasoning uncertainty achieved by explicit world modeling satisfies the following properties:
\begin{enumerate}
    \item Reasoning uncertainty does not increase: $\mathbb{H}(r_i| o_{0}, r_{0:i-1}) - \mathbb{H}(r_i| R_i) = \mathbb{I}(o_{1:i-1}; r_i| o_0,r_{0:i-1}) \geq 0.$
    \item The reasoning uncertainty improvement is bounded by both \textnormal{(i)} the information that observations provide about the underlying states and \textnormal{(ii)} the information that the reasoning step requires about those states:
    \begin{equation}
    \mathbb{I}(o_{1:i-1}; r_i| o_0,r_{0:i-1}) \leq \min\left(\mathbb{I}(o_{1:i-1}; s_{1:i-1}), \mathbb{I}(r_i; s_{0:i-1}, r_{0:i-1})\right).
    \label{eq:mi_ub}
    \end{equation}
\end{enumerate}
\end{thm}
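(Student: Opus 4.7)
The plan is to prove Part 1 as a direct application of the definition of conditional mutual information, and Part 2 by applying the data-processing inequality (DPI) to two Markov chains induced by the MOMDP generative structure of Section~\ref{sec:momdp}.

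For Part 1, since $R_i$ collects $o_0, r_{0:i-1}$, and $o_{1:i-1}$, I would rewrite $\mathbb{H}(r_i\mid R_i) = \mathbb{H}(r_i\mid o_0, r_{0:i-1}, o_{1:i-1})$ and observe that the claimed difference is by definition the conditional mutual information $\mathbb{I}(o_{1:i-1}; r_i\mid o_0, r_{0:i-1})$, whose non-negativity finishes the job in one line. For Part 2, I would formalize the ``appropriate assumptions'' as those already implicit in the MOMDP: each observation $o_j = e_{\phi_j}(s_j)$ depends on the world only through $s_j$, and each reasoning step $r_j$ is generated from $(o_{0:j-1}, r_{0:j-1})$. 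These two facts yield, respectively, the Markov chains
\begin{align*}
    &(r_i, o_0, r_{0:i-1}) - s_{1:i-1} - o_{1:i-1}, \\
    &(o_{0:i-1}, r_{0:i-1}) - (s_{0:i-1}, r_{0:i-1}) - r_i,
\end{align*}
the first because $o_{1:i-1}$ is determined by $s_{1:i-1}$, and the second because $(o_{0:i-1}, r_{0:i-1})$ is a deterministic function of $(s_{0:i-1}, r_{0:i-1})$.

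To produce the two upper bounds I would start from two elementary identities that follow from the MI chain rule and non-negativity: \textnormal{(a)} $\mathbb{I}(X;Y\mid Z) \leq \mathbb{I}(X; Y, Z)$, and \textnormal{(b)} $\mathbb{I}(X;Y\mid Z) \leq \mathbb{I}(X, Z; Y)$. Applying \textnormal{(a)} with $Z = (o_0, r_{0:i-1})$ gives $\mathbb{I}(o_{1:i-1}; r_i\mid o_0, r_{0:i-1}) \leq \mathbb{I}(o_{1:i-1}; r_i, o_0, r_{0:i-1})$; DPI on the first Markov chain then collapses the right-hand side to $\mathbb{I}(o_{1:i-1}; s_{1:i-1})$, establishing bound~(i). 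Applying \textnormal{(b)} with the same $Z$ gives $\mathbb{I}(o_{1:i-1}; r_i\mid o_0, r_{0:i-1}) \leq \mathbb{I}(o_{0:i-1}, r_{0:i-1}; r_i)$; DPI on the second Markov chain reduces this to $\mathbb{I}(r_i; s_{0:i-1}, r_{0:i-1})$, establishing bound~(ii). Taking the minimum of the two bounds yields Eq.~\eqref{eq:mi_ub}.

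The main obstacle is not the information-theoretic bookkeeping, which is routine, but the clean articulation of the ``appropriate assumptions.'' Both DPI steps rely on the observation map acting only through the latent state, so the hypothesis must be phrased generally enough to accommodate stochastic view maps $\phi$ and the interleaving of $r$ and $o$ generation in Eq.~\eqref{eq:cot}, while still making the required conditional independences exact—in particular, the second Markov chain implicitly uses the fact that the reasoning distribution $p_\theta(r_i\mid o_{0:i-1}, r_{0:i-1})$ depends on the states only through the observations and prior reasoning, which is a modeling rather than a structural hypothesis.
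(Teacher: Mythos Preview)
Your proposal is correct and, for Part~1, identical to the paper's one-line argument. For Part~2, your route differs from the paper's: the paper introduces the ternary (interaction) mutual information $\mathbb{I}(s_{1:i-1}; o_{1:i-1}; r_i \mid C)$ with $C=(o_0,r_{0:i-1})$, uses the vanishing of $\mathbb{I}(o_{1:i-1}; r_i \mid s_{1:i-1}, C)$ to rewrite $\mathbb{I}(o_{1:i-1}; r_i \mid C)$ as this ternary quantity, bounds it symmetrically by $\mathbb{I}(o_{1:i-1}; s_{1:i-1} \mid C)$ and $\mathbb{I}(s_{1:i-1}; r_i \mid C)$, and only then removes the conditioning on $C$ using determinism of the observation map. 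You instead first remove the conditioning via the chain-rule inequalities (a) and (b), and then apply DPI to the two Markov chains you identify. Both arguments ultimately rest on the same structural assumption---that each $o_j$ is a deterministic function of the (sliced) state $s_j$---and both are routine; yours is arguably more elementary in that it avoids interaction information altogether.

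One clarification: your closing worry that the second Markov chain requires a separate ``modeling hypothesis'' on how $r_i$ is generated is unnecessary. Because $(o_{0:i-1}, r_{0:i-1})$ is a deterministic function of $(s_{0:i-1}, r_{0:i-1})$, the chain $(o_{0:i-1}, r_{0:i-1}) - (s_{0:i-1}, r_{0:i-1}) - r_i$ holds for \emph{any} conditional law of $r_i$, so no assumption on the reasoning distribution is needed beyond the deterministic observation map already used for the first chain.
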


The uncertainty of the target distribution is closely related to sample efficiency and learning difficulty. Consequently, the upper bound on the improvement of reasoning uncertainty (Eq.~\eqref{eq:mi_ub}) highlights another trade-off in the choice of observation modality for world modeling. The first term indicates that observations should be sufficiently informative about the underlying latent states. In contrast, the second suggests that they need only preserve the task-relevant aspects of the states required to select appropriate reasoning steps. Excessively detailed observations may be unnecessary and even detrimental, increasing world modeling errors.

\textbf{Prior knowledge.} Although visual world models are more informative, they are intrinsically more difficult to learn from scratch due to the high dimensionality and complexity of visual observations. Fortunately, modern AI systems are typically large-scale pre-trained, which endows them with strong prior knowledge and enables faster convergence and improved generalization during downstream post-training. As discussed earlier, humans tend to represent different aspects of world knowledge through different modalities. Consequently, for a given downstream task, the distribution shift between its transition distribution and that learned during large-scale Internet pre-training can vary substantially across modalities. The generalization bound in Theorem~\ref{thm:finetune_nontrivial} of Appendix~\ref{app:theoretical_transfer} suggests that this modality-dependent distribution shift is closely related to the post-training sample efficiency of the corresponding world model. This highlights the importance of acquiring broad prior knowledge across modalities during pre-training, and of leveraging the proper modality whose priors are best aligned with the downstream task.

Drawing on the above analysis, we formulate our central hypothesis regarding when and how visual generation benefits reasoning, thereby helping narrow the gap between multimodal AI and human capabilities.

\begin{tcolorbox}[colback=blue!2!white,leftrule=2.5mm,size=title]
\textbf{The Visual Superiority Hypothesis:} In multimodal reasoning tasks grounded in the physical world, visual generation as a world model yields representations that are more informative and knowledge-rich than those produced by verbal world models.
\end{tcolorbox}

\section{Experiment Settings}
\label{sec:settings}

Finally, we empirically validate the insights and theoretical analyses presented above through a series of controlled experiments. In this section, we describe the evaluation tasks and model training procedures.

\subsection{VisWorld-Eval: Task Suite for Reasoning with Visual World Modeling}
\label{sec:benchmark}

While prior work has primarily designed evaluation tasks heuristically, we principledly evaluate multimodal reasoning across tasks designed to specific world model capabilities. Building on related benchmarks, we identify and curate a total of seven tasks, forming an evaluation suite tailored to assess reasoning with visual world modeling. All tasks are framed as question answering with concise, verifiable answers, and performance is measured by answer accuracy. We refer to this suite as \textit{VisWorld-Eval}, and summarize it in Figure~\ref{fig:benchmark}.

\textbf{World simulation.} We consider the following tasks that primarily require simulating world dynamics over time: (1) \textit{Paper folding}: Adapted from SpatialViz-Bench \cite{wang2025spatialviz}, this task presents a sequence of paper folds followed by hole punching, and asks for the distribution of holes after the paper is unfolded. Successfully solving this task requires simulating the unfolding process, relying on prior knowledge of symmetry and spatial transformations that is commonly grounded in visual experience. (2) \textit{Multi-hop manipulation}: Build upon CLEVR \cite{johnson2017clevr}, this task features a scene containing objects with various shapes and colors that undergo a sequence of operations, such as addition, removal, or color changes. The final question queries properties of the resulting layouts. Since target objects of operations are often specified via relative spatial relationships, this task places strong demands on state tracking and spatial understanding. (3) \textit{Ball tracking}: Adapted from RBench-V \cite{guo2025rbench}, this task evaluates physical dynamics simulation by requiring the model to infer the trajectory of a ball undergoing ideal specular reflections within a given scene and predicting which numbered hole it will ultimately enter. In addition, we include (4) \textit{Maze} \cite{maze-dataset} and (5) \textit{Sokoban} \cite{tong2025gamerlsynthesizingmultimodalverifiable}, as these two grid-world tasks are commonly used in prior work of studying visual generation for reasoning \cite{xu2025visual, li2025imagine}.

\textbf{World reconstruction.} We also evaluate tasks that emphasize reconstructing underlying world structure from partial observations: (6) \textit{Cube 3-view projection}: Adapted from SpatialViz-Bench \cite{wang2025spatialviz}, this task provides an isometric view and two orthographic views of a connected cube stack, and asks about an unseen viewpoint. Solving the task requires reconstructing the full 3D structure and mentally rotating or projecting it into the queried view, a process closely aligned with human visual mental representations. (7) \textit{Real-world spatial reasoning}: We focus on the positional relationship subset of MMSI-Bench \cite{yang2025mmsi}. Given multiple views of a realistic scene, these tasks ask about positional relationships among the cameras, objects, and regions. Successfully answering these questions requires constructing a coherent spatial mental model of the scene from limited viewpoints to support accurate spatial reasoning.

For each task, we construct SFT data by designing different CoT patterns with implicit, verbal, or visual world modeling, enabling controlled comparative evaluations. Data construction pipeline and examples across tasks are presented in Appendix~\ref{app:task_details}.

\begin{figure}[tbp]
    \centering
    \includegraphics[width=\linewidth]{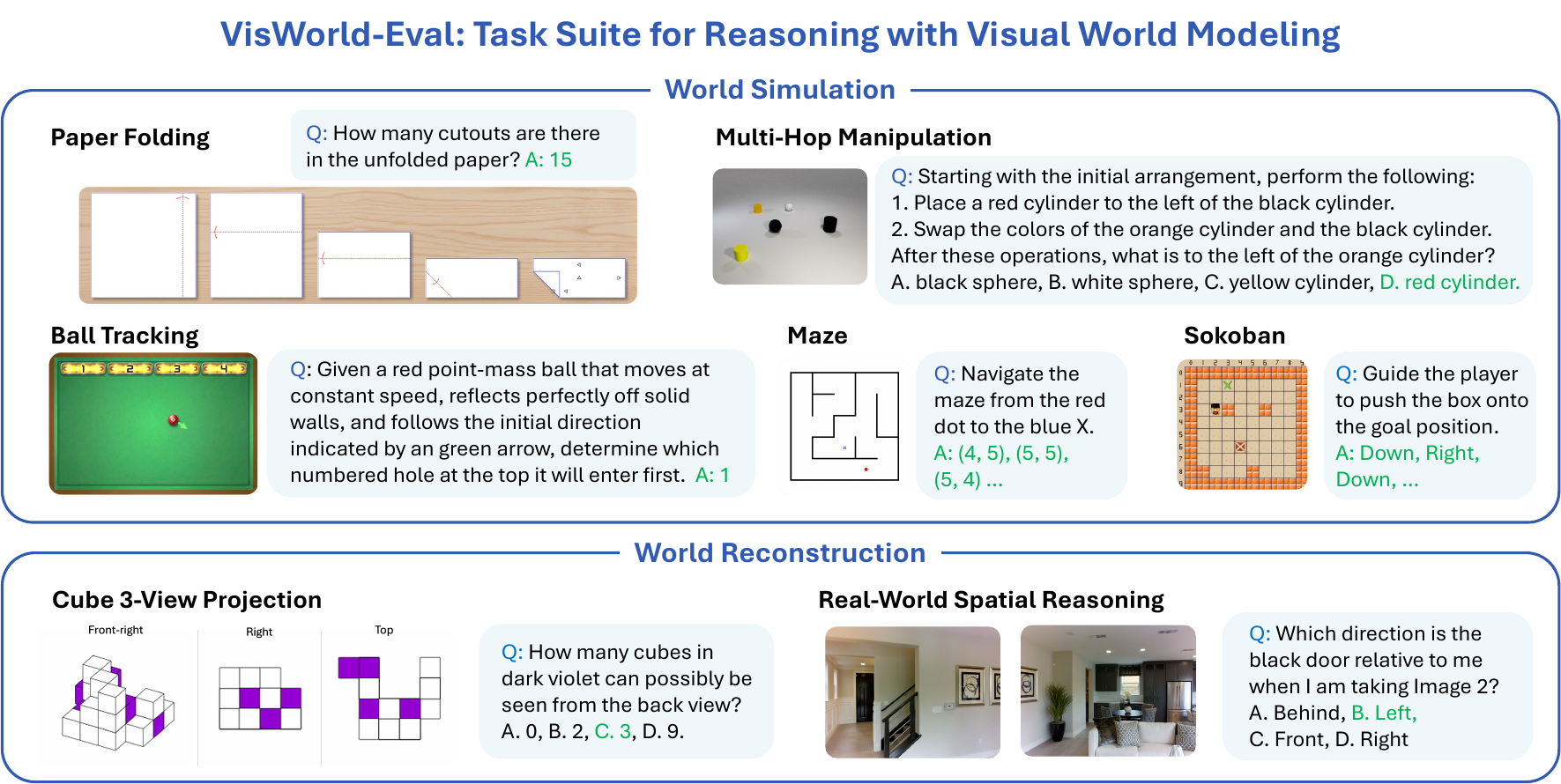}
    \caption{The VisWorld-Eval suite for assessing multimodal reasoning with visual world modeling. VisWorld-Eval comprises seven tasks spanning both synthetic and real-world domains, each designed to isolate and demand specific atomic world-model capabilities.}
    \label{fig:benchmark}
\end{figure}

\begin{table}[tbp]
\caption{Zero-shot evaluation of advanced VLMs on VisWorld-Eval. We report the average accuracy over five tasks (excluding Maze and Sokoban) and over all seven tasks.}
\label{tab:leaderboard}
\centering
\setlength{\tabcolsep}{5pt}
\small
\begin{tabular}{l|ccccccc|cc}
\toprule
Models      & 
\begin{tabular}[c]{@{}c@{}}\scalebox{0.9}{Paper}\\\scalebox{0.9}{Folding}\end{tabular} & 
\begin{tabular}[c]{@{}c@{}}\scalebox{0.9}{Multi-Hop}\\\scalebox{0.9}{Manip.}\end{tabular} & 
\begin{tabular}[c]{@{}c@{}}\scalebox{0.9}{Ball}\\\scalebox{0.9}{Tracking}\end{tabular} & 
\begin{tabular}[c]{@{}c@{}}\scalebox{0.9}{Cube}\\\scalebox{0.9}{3-View}\end{tabular} & 
\begin{tabular}[c]{@{}c@{}}\scalebox{0.9}{MMSI}\\\scalebox{0.9}{(Pos. Rel.)}\end{tabular} &  
\scalebox{0.9}{Maze} & 
\scalebox{0.9}{Sokoban} & 
\begin{tabular}[c]{@{}c@{}}Overall\\\scalebox{0.9}{(5 tasks)}\end{tabular} & 
\begin{tabular}[c]{@{}c@{}}Overall\\\scalebox{0.9}{(7 tasks)}\end{tabular} 
\\ \midrule
\multicolumn{10}{c}{\textit{Proprietary Models}} \\ \midrule
Gemini 3 Flash & \underline{25.6}  &  \textbf{75.4}   &  \textbf{55.3}   & \underline{52.7}   &  41.3     & \underline{73.9}    &   \textbf{99.3}  &  \textbf{50.0} & \textbf{60.5}     \\
Gemini 3 Pro &  \textbf{27.0} &  74.5   &  \underline{44.7}   &  \textbf{53.3}  &  \textbf{49.6}  & 33.5    &  \underline{90.2}   &  \underline{49.8} &  \underline{53.2}      \\
Seed 1.8  & 10.6  & \underline{75.2}   &  24.4   & 42.5   &  38.8    & \textbf{83.9}    &  68.3   &   38.3 &  49.1   \\
GPT 5.1  & 6.4  & 73.9   & 34.8   & 44.5 & \underline{44.8} &   0.6  &  62.8 & 40.8  &  38.2     \\
o3 & 13.5  &  68.1   &  24.7   & 37.7   &  44.4    &   0.0  &   36.0  &  37.6 &  32.0    \\
\midrule
\multicolumn{10}{c}{\textit{Open-Source Models}} \\ \midrule
\scalebox{0.9}{Qwen3-VL-8B-Thinking} \cite{bai2025qwen3vltechnicalreport}  & 11.0  & 49.3    &  17.8   &  21.2  &  27.7    &   0.0  &  5.8   &   25.4 & 18.9       \\
\scalebox{0.9}{BAGEL-7B-MoT \cite{deng2025emerging}}  & 11.2  & 31.6    & 19.4    &  26.8  & 27.2     & 0.0    &  0.2   &  23.2 & 16.6       \\
\bottomrule
\end{tabular}
\end{table}

\textbf{Evaluation of advanced VLMs.} Table~\ref{tab:leaderboard} reports the zero-shot performance of advanced VLMs on VisWorld-Eval. Overall, these models perform suboptimally, highlighting limitations of current multimodal AI systems. Among them, Gemini 3 Flash and Gemini 3 Pro remarkably outperform the other models; however, their performance remains far from satisfactory on challenging tasks like paper folding, ball tracking, cube 3-view projection, and real-world spatial reasoning.

\subsection{Unified Multimodal Model Training and Evaluation}

\textbf{Evaluation protocol.} To investigate the benefits of visual generation in multimodal reasoning, we evaluate post-trained UMMs, rather than the zero-shot performance of base models. To the best of our knowledge, no open-source model has been natively optimized for interleaved verbal-visual generation for reasoning. Even commercial closed-source models currently exhibit fundamental limitations in generating visual intermediate reasoning steps \cite{liang2025rover, zhou2025visualizing}. Focusing on post-trained models, therefore, provides a more meaningful estimate of the upper bound for multimodal reasoning performance, while reducing confounding effects arising from insufficient pre-training due to limited interleaved data availability or quality.

\textbf{Model training.} We adopt BAGEL \cite{deng2025emerging}, a state-of-the-art open-source unified multimodal model, as our base model. Most experiments are conducted by supervised fine-tuning (SFT) on task-specific datasets, where verbal and visual generation in both chain-of-thought reasoning and final answers are optimized using cross-entropy and flow-matching loss. Specifically, the loss for reasoning with visual world modeling is as follows:
\begin{equation}
    \mathcal{L}_\theta(Q, I, R, A) = -\sum_{i=1}^{H+1} \sum_{j=1}^{|r_i|}\log p_\theta\left(r_{i,j}\mid r_{i, <j}, R_i\right) + \sum_{i=1}^H \mathbb{E}_{t, \epsilon}\left\|v_\theta(o_i^t, t\mid \tilde R_i) - (\epsilon-o_i)\right\|_2^2,
\end{equation}
where $o_i^t = to_i + (1-t)\epsilon$ are noisy observations. We emphasize that in our formulation, $r_i$ refers to a verbal reasoning step, instead of a reward.
We also perform reinforcement learning from verifiable rewards (RLVR) following SFT. During RL, only the verbal generation component is optimized by GRPO \cite{guo2025deepseek}, while visual generation is regularized via the KL-divergence with respect to the SFT-trained reference model:
\begin{align}
\nonumber
    \mathcal{J}_\theta(Q, I) = \mathbb{E}_{o, r\sim p_{\theta_\text{old}}}
\Bigg[ &\sum_{i=1}^{H+1} \sum_{j=1}^{|r_i|} \Bigg(\min \Big( \frac{p_\theta\left(r_{i,j}\mid r_{i, <j}, R_i\right)}{p_{\theta_{\text{old}}}\left(r_{i,j}\mid r_{i, <j}, R_i\right)} {A},  
\ \text{clip} \Big( \frac{p_{\theta}\left(r_{i,j}\mid r_{i, <j}, R_i\right)}{p_{\theta_{\text{old}}}\left(r_{i,j}\mid r_{i, <j}, R_i\right)}, 1 - \varepsilon, 1 + \varepsilon \Big) {A} \Big) \Bigg) \\
&-\sum_{i=1}^H \mathbb{E}_{t, \epsilon}\left\|v_\theta(o_i^t, t\mid \tilde R_i) - v_{\theta_{\text{ref}}}(o_i^t, t\mid \tilde R_i)\right\|_2^2\Bigg].
\end{align}

Full implementation details and hyperparameters are provided in Appendix~\ref{app:training}.

\section{Experimental Results}

In this section, we demonstrate that visual world modeling boosts multimodal reasoning through two atomic capabilities: world simulation (Section~\ref{sec:exp_simu}) and world reconstruction (Section~\ref{sec:exp_recon}). We also identify tasks in which it is unhelpful (Section~\ref{sec:exp_negative}), where implicit or verbal world modeling is sufficient. We conduct analysis in detail. Interestingly, we reveal emergent internal representations in UMMs that support implicit world modeling on simple maze tasks.

\subsection{Visual World Simulation Boosts Multimodal Reasoning}
\label{sec:exp_simu}

\textbf{Main results.} Figure~\ref{fig:overall_sft} summarizes the performance of SFT-trained UMMs under different chain-of-thought formulations across all tasks. We observe that interleaved CoT with visual world modeling significantly outperforms its purely verbal counterparts on three world simulation tasks: paper folding, multi-hop manipulation, and ball tracking. These gains are attributed to both the richer expressiveness and stronger prior knowledge afforded by the visual modality. In particular, it is difficult for models to precisely ground object coordinates and perform arithmetic operations without external tools in tasks such as multi-hop manipulation and ball tracking, with the latter being especially challenging. Thus, verbal world modeling is inappropriate and omitted in these tasks. This exacerbates ambiguity and hallucinations in purely verbal reasoning. Similarly, in paper folding, although models can track the states of holes, it remains difficult to completely depict the paper contour during unfolding. Moreover, as showcased in Figure~\ref{fig:showcase} and~\ref{fig:paperfolding_failure_case}, the spatial transformation involved in paper unfolding critically relies on an understanding of geometric symmetry, which can be more naturally learned from visual data like images and videos.

\textbf{Sample efficiency.} To further demonstrate the stronger prior knowledge embedded in the visual modality, we experiment comparing the sample efficiency of verbal and visual world modeling on the paper folding task. As shown in Figure~\ref{fig:sample_eff}, reasoning with visual world modeling exhibits substantially higher sample efficiency, achieving performance comparable to verbal world modeling while using more than $4\times$ less SFT data.

\subsection{Visual World Reconstruction Boosts Multimodal Reasoning}
\label{sec:exp_recon}

\textbf{Main results.} As shown in Figure~\ref{fig:overall_sft}, multimodal reasoning tasks that rely on world reconstruction capabilities also benefit substantially from visual world modeling. In the cube 3-view task, predicting a novel view of stacked cubes, denoted symbolic character matrices, suffers from limited prior knowledge, whereas visually rotating objects has been a rich experience during pre-training with large-scale Internet videos. For MMSI tasks, fully describing a novel view of a realistic scene using text alone is similarly ill-suited as in the previous subsection, and we also discover hallucinations in pure verbal reasoning, which lacks grounding to visual generation. We do not observe consistent improvements on other positional-relationship subtasks in MMSI-Bench, except camera-object and camera-region, which we attribute to current UMM's limitations in both spatial understanding during verbal reasoning and generation quality in visual world modeling. Full quantitative results and qualitative examples are provided in Appendix~\ref{app:exp}. We expect these limitations to be mitigated in future work with stronger base models.

\begin{figure}[tbp]
    \centering
    \includegraphics[width=\linewidth]{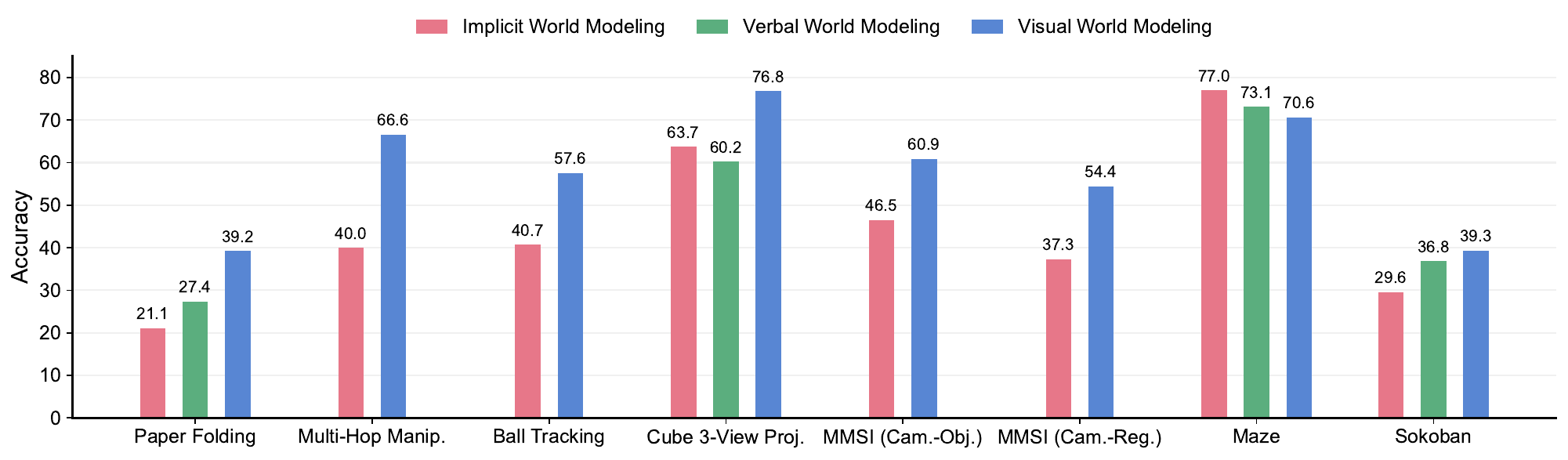}
    \caption{Performance of SFT-trained UMMs with different world model-based chain-of-thought formulations across seven tasks from VisWorld-Eval. Refer to Table~\ref{tab:leaderboard} for zero-shot performance of advanced VLMs.}
    \label{fig:overall_sft}
\end{figure}

\textbf{Effects of task difficulties.} Figure~\ref{fig:cube_analysis} analyzes performance on the cube 3-view projection task across varying sizes of input cube stacks. We observe a consistent advantage of reasoning with visual world modeling over verbal world modeling across all difficulty levels. Notably, for cube stacks of size six—out of the training distribution—visual world modeling still yields approximately a $10\%$ performance improvement.

\textbf{World model fidelity.} Modern AI models are known to exhibit hallucinations along their reasoning trajectories, even when producing correct final answers \cite{liang2025rover}. We therefore evaluate the fidelity of world modeling in the cube 3-view projection task by comparing ground-truth views with the intermediate views generated verbally or visually during reasoning. To focus on structural correctness, we compare only the shapes of the views and completely ignore color information. Even under this relaxed evaluation setting, Figure~\ref{fig:cube_analysis} shows that verbal world modeling exhibits dramatically low fidelity, with scores degrading to near zero. Notably, approximately half of the samples require predicting the opposite view of a given input view, a transformation that only involves horizontal mirroring. Visual world modeling, benefiting from stronger prior knowledge of such geometric transformations, captures these patterns effectively and achieves fidelity scores consistently exceeding $50\%$.

\subsection{Visual World Modeling is Unhelpful for Certain Tasks}
\label{sec:exp_negative}

\textbf{Main results.} (Un)surprisingly, we do not observe notable improvements on grid-world tasks, including maze and Sokoban. In the maze tasks, reasoning with implicit world modeling—without explicitly tracking coordinates—achieves the best performance with a slight advantage. These results are consistent with recent empirical findings \cite{du2025revisiting}. We argue that this is also well explained by our world model perspective. In these tasks, state tracking is relatively simple, typically requiring the maintenance of only one or two two-dimensional coordinates, which can be adequately handled through verbal reasoning alone. Furthermore, in the maze task, we hypothesize that such world modeling can be implicitly encoded in the model's hidden representations \cite{li2023emergent}, which helps explain the competitive performance of verbal reasoning without explicit coordinate tracking.

\begin{wrapfigure}{r}{0.39\textwidth}
  \setlength{\columnsep}{3pt}
  \vspace{-8pt}
  \begin{center}
    \includegraphics[width=0.96\linewidth]{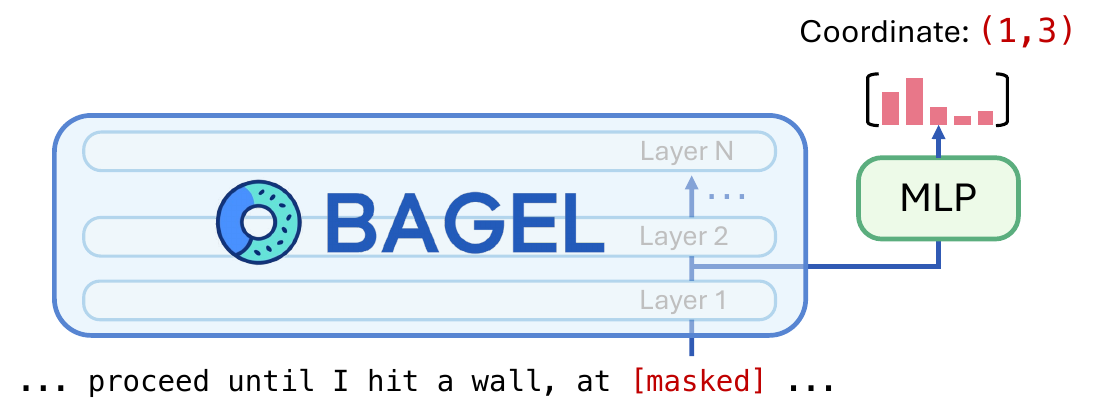}
  \end{center}
  \caption{Probing implicit world models, by training a set of probes, i.e., MLPs which infer the masked point coordinates during reasoning from internal representations.}
  \label{fig:implicit_wm_prob}
  \vspace{-10pt}
\end{wrapfigure}

\textbf{Demystifying implicit world modeling.} To validate this hypothesis, we probe the internal representations of models, as illustrated in Figure~\ref{fig:implicit_wm_prob}. We consider the same architecture, BAGEL, with three different sets of weights: a randomly initialized model, the pre-trained model, and the model supervised fine-tuned on CoT data in the implicit world modeling format, in which special tokens mask all explicit point coordinates during the reasoning process. For each model, we extract the hidden representations of these special tokens at each layer. We then train multilayer perceptrons (MLPs) on these representations to predict the underlying true point coordinates. 

Figure~\ref{fig:implict_wm} reports the prediction accuracy on a validation set. As expected, the randomly initialized model completely fails to internally track point states, achieving only random-guess accuracy on $5\times 5$ mazes. In contrast, the pre-trained model \cite{deng2025emerging} already exhibits emergent representations that are predictive of maze states. Notably, we observe a non-monotonic trend across layers: prediction accuracy increases from lower layers (which capture low-level features) to middle layers, and then decreases toward the final layers, which are likely specialized for next-token prediction. Finally, supervised fine-tuning on domain-specific data, despite providing no explicit coordinate supervision, substantially enhances this internal predictability, achieving near-perfect accuracy. These in-depth results help explain our main experimental findings: as the model already possesses the capability for implicit world modeling, it does not necessarily benefit from explicit verbal world modeling, let alone more complex forms of visual world modeling.

\begin{figure}[tbp]
    \centering
    \begin{minipage}[!ht]{0.3\textwidth}
        \centering
        \includegraphics[width=\textwidth]{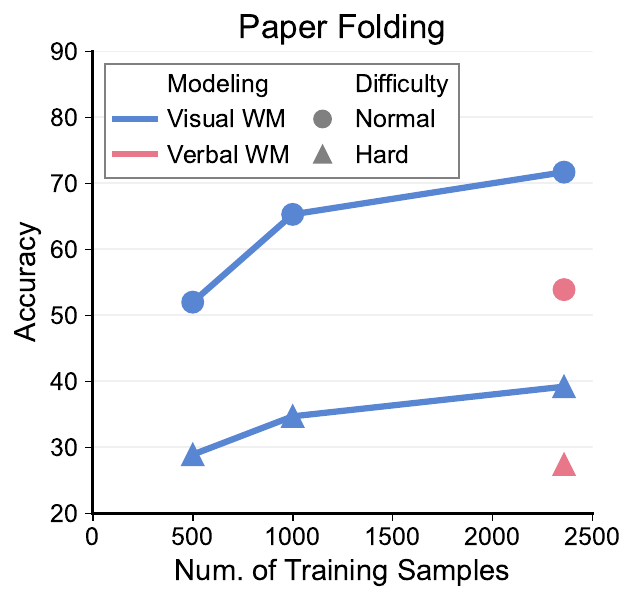}
        \subcaption{Sample efficiency.}
        \label{fig:sample_eff}
    \end{minipage}
    \begin{minipage}[!ht]{0.3\textwidth}
        \centering
        \includegraphics[width=\textwidth]{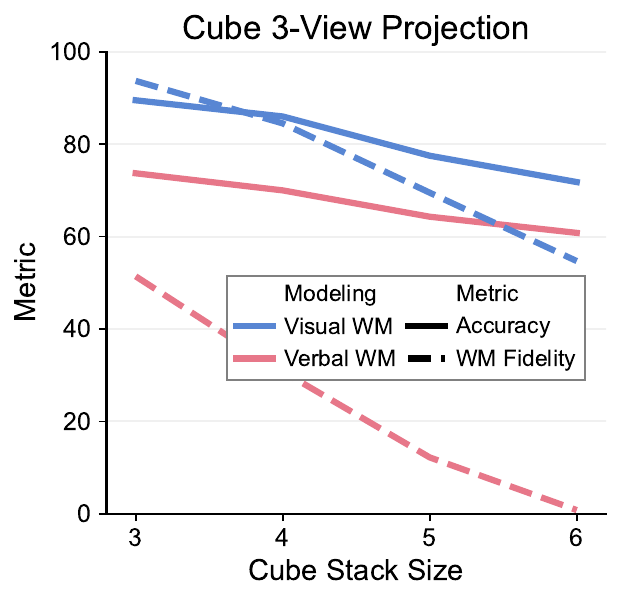}
        \subcaption{World model fidelity.}
        \label{fig:cube_analysis}
    \end{minipage}
    \begin{minipage}[!ht]{0.3\textwidth}
        \centering
        \includegraphics[width=\textwidth]{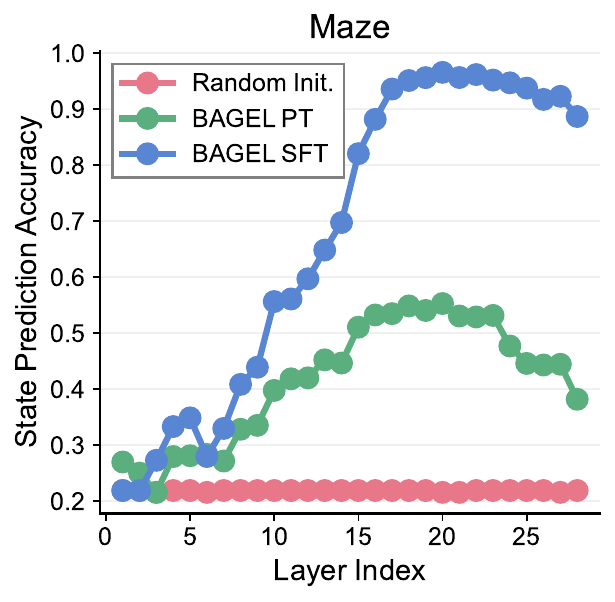}
        \subcaption{Implicit world modeling.}
        \label{fig:implict_wm}
    \end{minipage}
    \caption{Model analysis: (a) Performance of UMMs on the paper-folding task with varying numbers of SFT samples. Reasoning with visual world modeling achieves a $4\times$ improvement in sample efficiency. WM = world modeling. (b) Performance of UMMs on the cube 3-view projection task with increasing sizes of input cube stacks, evaluated using both answer accuracy and world-model fidelity. Visual world modeling demonstrates dramatically better fidelity of view synthesis. (c) Prediction accuracy of masked point coordinates in CoTs using representations extracted from different layers of different UMMs, revealing emergent internal world representations. PT = Pre-trained.}
    \label{fig:analysis}
\end{figure}

\begin{figure}[tbp]
    \centering
    \includegraphics[width=0.85\linewidth]{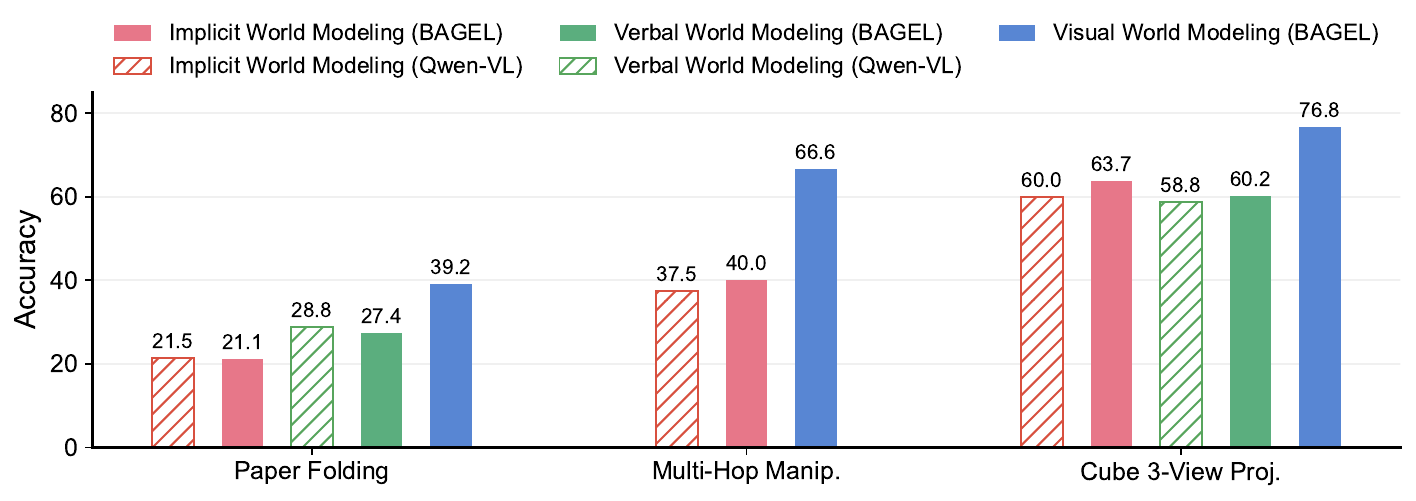}
    \caption{Performance of SFT-trained VLMs compared with UMMs across three tasks.}
    \label{fig:sft_vlm}
\end{figure}

\begin{figure}[tbp]
    \centering
    \includegraphics[width=0.85\linewidth]{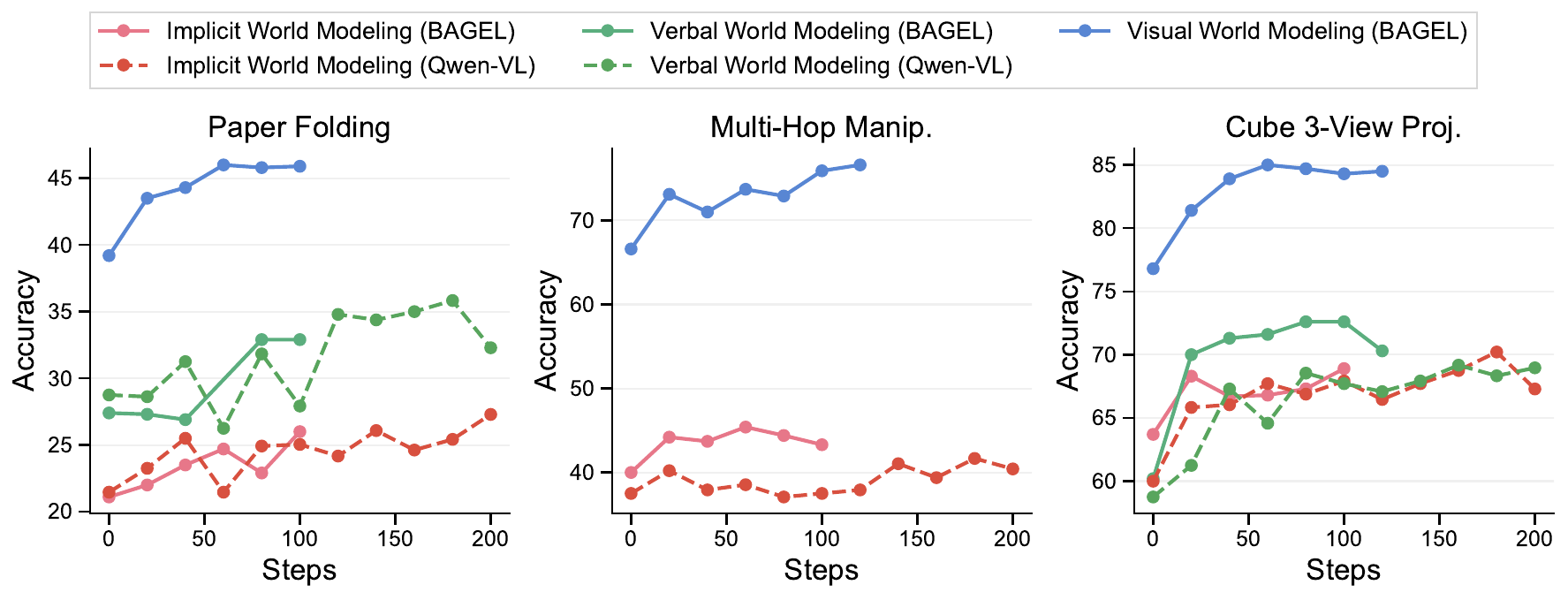}
    \caption{Performance of RLVR-trained VLMs and UMMs with different world-model-based CoT formulations across three tasks.}
    \label{fig:overall_rl}
\end{figure}

\subsection{Comparison with VLMs: Do UMMs Compromise Verbal Reasoning Capabilities?}
\label{sec:exp_vlm}

One may argue that UMMs are typically trained with a stronger emphasis on visual generation \cite{deng2025emerging}, which could compromise verbal reasoning capabilities, and bias comparisons in favor of visual world modeling. To address this concern, we compare with a pure VLM baseline, Qwen2.5-VL-7B-Instruct \cite{bai2025qwen2}, which shares the same Qwen 2.5 LLM base model, with BAGEL. We fine-tune Qwen2.5-VL on the same verbal CoT datasets used in the previous subsections and evaluate it on three representative tasks: paper folding, cube 3-view projection, and multi-hop manipulation.

\textbf{Results.} As shown in Figure~\ref{fig:sft_vlm}, the SFT performance of Qwen2.5-VL with implicit and verbal world modeling is comparable to that of BAGEL, without exhibiting significant advantages. It still lags behind BAGEL in settings that leverage visual world modeling. These results indicate that our findings arise from the inherent advantages of visual world modeling rather than from compromised verbal reasoning capabilities in UMMs.

\subsection{RL Enhances Various CoTs, Yet Does Not Close the Gap}
\label{sec:exp_rl}

Reinforcement learning from verifiable rewards (RLVR) has been a major driver of recent progress in reasoning models equipped with verbal chain-of-thoughts, achieving strong performance across domains such as mathematics \cite{guo2025deepseek}. While Figure~\ref{fig:overall_sft} shows a clear advantage of reasoning with visual world modeling after SFT, RLVR may further incentivize emergent reasoning behaviors that improve verbal CoTs. We thus conduct comparative RLVR experiments across different world model–based CoT formulations on three representative tasks.

\textbf{Results.} Figure~\ref{fig:overall_rl} presents the learning curves under RLVR for different models. We observe consistent improvements during RLVR for different CoT formulations. However, the performance gap persists. We also find that VLMs and UMMs generally perform similarly with verbal CoTs. These results suggest that the superiority arises from inherent advantages of the world modeling approach, rather than insufficient post-training. Notably, RL enhances reasoning with visual world modeling, even though only the verbal generation components of interleaved CoTs are directly optimized. We envision that the full potential of interleaved CoTs will be further released with the development of RL algorithms tailored for verbal-visual interleaved generation.

\section{Discussions}
\label{sec:discussions}

By bridging concepts from human cognition and artificial intelligence, we revisit the mechanisms underlying human reasoning and the central role that world models play. This provides a new perspective on the use of visual generation for reasoning in multimodal AI, highlighting its potential to serve as visual world models that complements the verbal world models embedded in LLMs, thereby enabling more human-like reasoning on scenarios grounded in the physical world. For the first time, this perspective is studied in a principled manner, through theoretical formulations that bridge world models and reasoning, as well as through empirical evaluations whose results are well explained by and strongly support the proposed insights. We hope this work helps address longstanding questions about the synergistic effects between generation and reasoning, and more broadly contributes to the development of more human-like AI that thinks and acts with multimodal world models.

\begin{figure}[tbp]
    \centering
    \includegraphics[width=\linewidth]{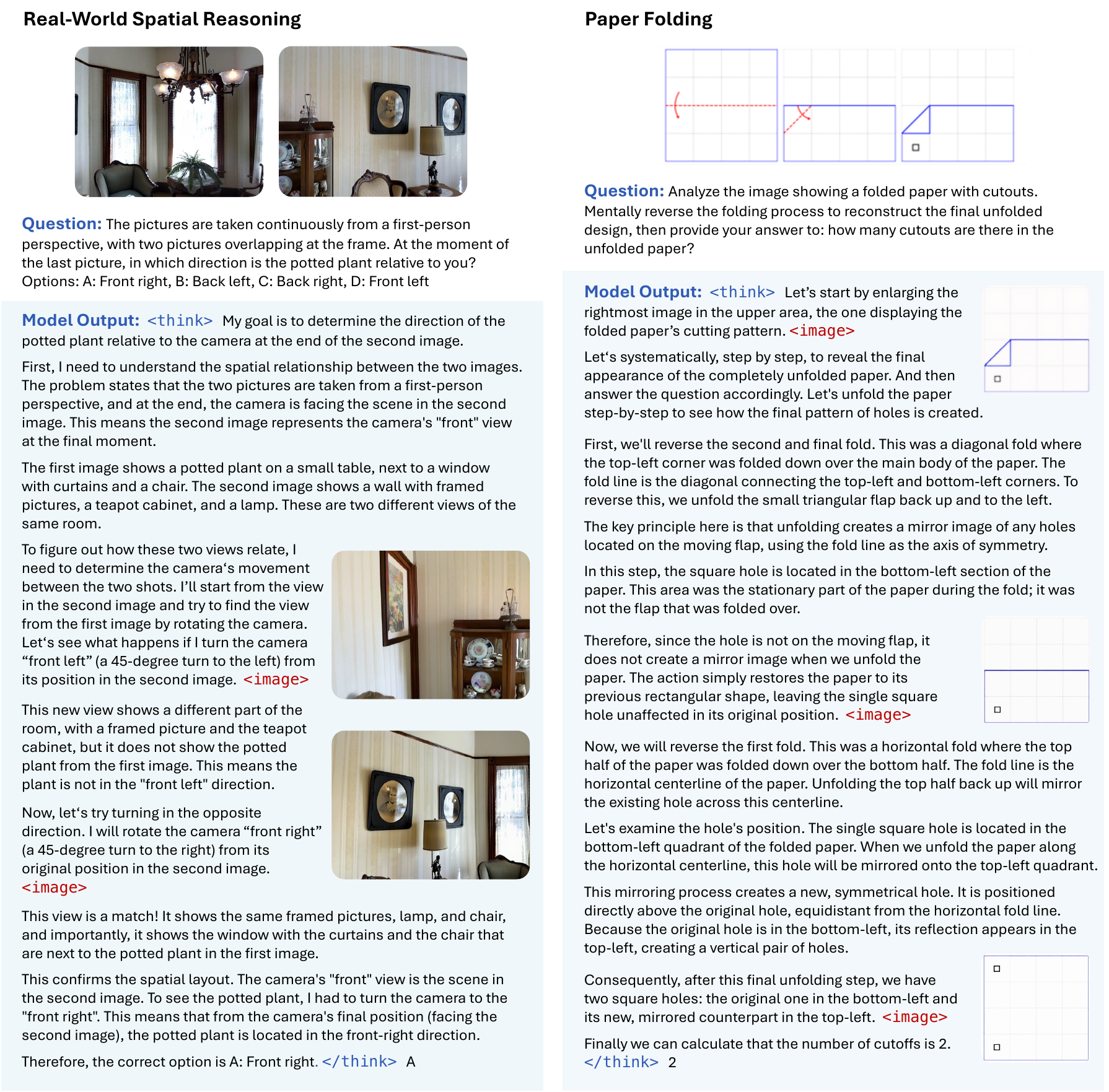}
    \caption{Showcases of interleaved verbal-visual chain-of-thought reasoning, generated by post-trained UMMs, where visual generation serves as world models. \texttt{<image>} denotes a placeholder indicating the position of a generated image.}
    \label{fig:showcase}
\end{figure}

\textbf{Limitations and future work.} This work primarily focuses on spatial and physical reasoning tasks, where multimodal AI systems exhibit a pronounced performance gap relative to humans. Many other tasks proposed in the related literature can also be interpreted through our world model perspective. For example, a prominent class of benchmarks involves visual jigsaw tasks \cite{shi2025realunify, gu2025thinkmorph, liang2025rover, zou2025uni}, in which input image patches are cropped, masked, or shuffled. Such tasks essentially probe a form of world reconstruction capability, as corrupted images and videos are commonly treated as specific views within the world model literature \cite{assran2023self, bardes2024revisiting, assran2025v}. Another active area of interest lies in STEM reasoning. Recent work \cite{shi2025mathcanvas} leverages visual generation for mathematical diagram editing, such as constructing auxiliary geometric lines. This closely resembles how humans use visual sketchpads to support math understanding and reasoning, constructing visual world models of a symbolic system. However, as symbolic representations in mathematics are largely complete, and mathematical reasoning has been extensively optimized in modern LLMs, it remains unclear whether multimodal interleaved CoT can fundamentally break through the performance limit, warranting further investigation.

We do not apply reinforcement learning to the visual generation components of verbal–visual interleaved CoTs \cite{liu2025flow}. Prior work has shown that world models themselves can be improved through RLVR \cite{wu2025rlvr}. As discussed in Section~\ref{sec:exp_rl}, developing RL algorithms specifically tailored to interleaved verbal–visual generation may further improve world-model fidelity during reasoning and incentivize the emergence of stronger and intriguing world-modeling capabilities.

The analysis of emergent representations for implicit world modeling in Figure~\ref{fig:implict_wm} is intriguing but preliminary. We hope this result will rekindle interest in probing approaches \cite{li2023emergent} for interpreting the latent representations learned by different models. In particular, we are interested in comparing the internal representations of VLMs and UMMs, as the latter may capture complementary aspects of world knowledge through training for multimodal generation.

Artificial intelligence is increasingly being embodied in the physical world \cite{gupta2021embodied}. Our work, particularly the visual superiority hypothesis, suggests that learning visual world models is therefore essential for embodied intelligence. Visual world modeling enables embodied agents to better understand their environments, from imagining occluded regions to interpreting user intentions from an egocentric perspective, thereby supporting more reliable and natural everyday services. It also facilitates planning and decision-making by allowing agents to mentally simulate the precise outcomes of potential actions, leading to more effective interaction with the world. Rather than relying on loosely coupled modules \cite{feng2025reflective} or performing only single-step reasoning \cite{zhao2025cot}, we envision a future direction in which flexible multimodal world modeling and reasoning, empowered by interleave verbal-visual generation within a unified model, form core components of physical and embodied AI. 

\section*{Acknowledgements}

We would like to thank Yanwei Li, Rui Yang, Ziyu Zhu, and Feng Cheng for their assistance in constructing some preliminary training and test data. We also appreciate Xinchen Zhang, Jianhua Zhu, Yifan Du, Yuezhou Ma, Xingzhuo Guo, Ningya Feng, Shangchen Miao, and many colleagues for their valuable discussion.

\clearpage

\bibliographystyle{plainnat}
\bibliography{main}

\clearpage

\beginappendix

\section{Theorectical Analysis}
\label{app:theorectical}

\subsection{Informativeness}
\label{app:theorectical_info}

In this section, we provide the rigorous version of our world model-based chain-of-thought formulations, and proofs for Theorem~\ref{thm:kl_chain_rule} and Theorem~\ref{thm:mutual_info}.

\textbf{Formal problem setup and assumptions.} Given a question $Q$ and input images $I$, multimodal reasoning generates a chain-of-thought process $R$, followed by a final answer $A$. We explicitly formulate the reasoning process as an interleaving of logic reasoning steps and observations of the underlying MOMDP defined in Section~\ref{sec:momdp}: $R = (r_1, o_1), (r_2, o_2), \dots, (r_H, o_H)$ where $H$ denotes the (fixed) CoT length. For notation convenience, we denote the input image(s) as the initial observation $o_0$.

We assume that each MOMDP observation function admits a two-stage decomposition: $e_{\phi}(s) = g_{\phi_m}\left(f_{\phi_s}(s)\right), \Phi = \Phi_s \times \Phi_{m}$, where the inner modality-agnostic mapping $f_{\phi_s}$ (parameterized by $\phi_s \in \Phi_s$) extracts a \emph{slice} of the underlying state $s$, retaining only partial state information, and the outer modality-specific mapping $g_{\phi_m}$ (parameterized by $\phi_m \in \Phi_m$) renders the extracted slice into a particular observation modality. 

Under this decomposition, we assume that reasoning across different modalities of observations shares a common underlying \emph{oracle} reasoning process:
\begin{equation}
    \nonumber
    p(Q, \bar{s}_0, r_1, \bar{s}_1, \dots, r_H, \bar{s}_H, A) = p(Q) \left[\prod_{i=1}^H p(r_i | \bar s_{0:{i-1}}, r_{1:i-1}, Q) p(\bar s_i | \bar s_{0:i-1}, r_{1:i}, Q)\right] p(A | r_{1:H}, \bar s_{0:H}, Q), 
\end{equation}
where $\bar s_i=(s_i,{\phi_s}_i)\in \mathcal{S}\times\Phi_s$ denotes a modality-agnostic \emph{sliced} state. Each logic step $r_i$ is assumed to reason on sufficient sliced state information:
$
p(r_i \mid \bar s_{0:i-1}, r_{1:i-1}, Q)
=
p\!\left(r_i \mid f_{{\phi_s}_0}(s_0), \dots, f_{{\phi_s}_{i-1}}(s_{i-1}), r_{1:i-1}, Q\right),
$
and produces actionable outcomes that either (i) transit a previous world state $s_{j<i}$ via an implicit action $a_{i}$:
$
\bar s_{i} = (s_{i}, {\phi_s}_j), s_{i} \sim p(s_j, a_{i})
$
or (ii) query the same underlying world state with a new slice ${\phi_s}_{i}$, yielding
$
\bar s_{i} = (s_{j}, {\phi_s}_{i})
$
The oracle reasoning process is then rendered into a specific modality via
$
o_i = g_{\phi_m}\!\left(f_{{\phi_s}_i}(s_i)\right).
$
Unless otherwise specified, we abuse notation and use $s_i$ to denote $\bar s_i=(s_i,{\phi_s}_i)$ in the remainder of our analysis.

Given the above oracle CoT generation process, we learn a model $p_\theta$ whose joint distribution over CoTs and answers factorizes into a reasoning component and a world-modeling component:
\begin{align}
\label{app_eq:model_decomposition}
p_\theta(R, A | Q, I) = p_\theta(r_1, o_1, r_2, o_2, \dots, r_{H}, o_{H}, r_{H+1}| r_0, o_0) = \prod_{i=1}^{H+1} p_\theta(r_i| R_i) \prod_{i=1}^{H} p_\theta(o_i| \tilde R_i),
\end{align}
where we denote the question as $r_0$, the initial observation (input image(s)) as $o_0$, and the final answer as $r_{H+1}$. The CoT prefixes are defined as
$
R_i = (r_0,o_0,r_1,o_1,\dots,r_{i-1},o_{i-1}), \tilde R_i = (r_0,o_0,r_1,o_1,\dots,r_{i-1},o_{i-1},r_i).
$

\textbf{Proofs.} We provide proofs of Theorem~\ref{thm:kl_chain_rule} and Theorem~\ref{thm:mutual_info} below.

\begin{thm}[Restatement of Theorem~\ref{thm:kl_chain_rule}] For any observation modality $m$, the following inequality holds:
\begin{align}
\nonumber
\operatorname{KL}(p(A \mid Q, I)\mid\mid p_\theta(A \mid Q, I)) &\leq \operatorname{KL}(p(R, A \mid Q, I)\mid\mid p_\theta(R, A \mid Q, I))\\
= 
&\sum_{i=1}^{H+1} \underbrace{\mathbb{E}_p\left[\operatorname{KL}(p(r_i| R_i) \mid\mid  p_\theta(r_i| R_i) )\right]}_{\textnormal{reasoning errors}} + \sum_{i=1}^{H}  \underbrace{\mathbb{E}_p\left[\operatorname{KL}(p(o_i| \tilde R_i) \mid\mid p_\theta(o_i| \tilde R_i))\right]}_{\textnormal{world modeling errors}}.
\end{align}
\end{thm}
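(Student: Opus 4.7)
The plan is to decompose the proof into two standard information-theoretic ingredients: (i) monotonicity of KL divergence under marginalization, which yields the initial inequality bounding the marginal KL on $A$ by the joint KL on $(R, A)$; and (ii) the chain rule for KL divergence applied along the autoregressive ordering in Eq.~\eqref{app_eq:model_decomposition}, which yields the equality. Everything is done conditional on $(Q, I) = (r_0, o_0)$, so I would suppress these from the notation throughout.

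For the first inequality, I would invoke the standard identity that for any joint distributions $p(X,Y)$ and $q(X,Y)$, $\operatorname{KL}(p(X,Y)\mid\mid q(X,Y)) = \operatorname{KL}(p(Y)\mid\mid q(Y)) + \mathbb{E}_{p(Y)}\!\left[\operatorname{KL}(p(X\mid Y)\mid\mid q(X\mid Y))\right]$, and observe that the conditional KL term is non-negative. Taking $X = R$ and $Y = A$ (conditioned on $(Q,I)$) gives $\operatorname{KL}(p(A\mid Q,I)\mid\mid p_\theta(A\mid Q,I)) \leq \operatorname{KL}(p(R,A\mid Q,I)\mid\mid p_\theta(R,A\mid Q,I))$, which is exactly the first line of the theorem. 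This is a one-line argument once the identity is in hand.

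For the equality, I would apply the chain rule of KL divergence iteratively along the autoregressive ordering $r_1, o_1, r_2, o_2, \ldots, r_H, o_H, r_{H+1}$. Both the oracle distribution $p$ (from the oracle CoT process described above the statement) and the learned $p_\theta$ (from Eq.~\eqref{app_eq:model_decomposition}) factorize in this same order, so a telescoping application of the conditional decomposition above yields $\operatorname{KL}(p(R,A\mid Q,I)\mid\mid p_\theta(R,A\mid Q,I)) = \sum_{i=1}^{H+1} \mathbb{E}_p\!\left[\operatorname{KL}(p(r_i\mid R_i)\mid\mid p_\theta(r_i\mid R_i))\right] + \sum_{i=1}^{H} \mathbb{E}_p\!\left[\operatorname{KL}(p(o_i\mid \tilde R_i)\mid\mid p_\theta(o_i\mid \tilde R_i))\right]$, where the prefixes $R_i$ and $\tilde R_i$ are exactly those defined in the main text. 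Grouping the terms by the type of the current variable ($r_i$ vs.\ $o_i$) produces the reasoning-error and world-modeling-error summands.

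The main obstacle, to the extent there is one, is purely bookkeeping: one must verify that $p$ and $p_\theta$ share the same conditioning structure so that the chain rule lines up cleanly into precisely the prefixes $R_i$ and $\tilde R_i$, with no residual cross-terms. This is guaranteed by the factorization in Eq.~\eqref{app_eq:model_decomposition} together with the assumption that the oracle process produces logic steps $r_i$ and (rendered) observations $o_i$ in the same interleaved order. Once this alignment is in place, the proof reduces to two elementary applications of the KL chain rule, and no additional technical machinery (e.g., measure-theoretic regularity or absolute continuity arguments beyond those implicit in the KL being finite) is required.
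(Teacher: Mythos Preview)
Your proposal is correct and matches the paper's own proof essentially line for line: the paper invokes the data processing inequality (your monotonicity-under-marginalization argument) for the first inequality, and the KL chain rule along the factorization in Eq.~\eqref{app_eq:model_decomposition} for the equality. Your version is slightly more explicit about the intermediate identity $\operatorname{KL}(p(X,Y)\|q(X,Y)) = \operatorname{KL}(p(Y)\|q(Y)) + \mathbb{E}_{p(Y)}[\operatorname{KL}(p(X\mid Y)\|q(X\mid Y))]$, but the argument is the same.
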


\begin{proof}
    The first inequality follows from the data processing inequality: marginalizing out $R$ cannot increase the KL divergence. For the equality, we apply the chain rule for KL divergence together with the CoT factorization in Eq.~\eqref{app_eq:model_decomposition}. In particular, substituting the factorizations of $p(R,A\mid Q,I)$ and $p_\theta(R,A\mid Q,I)$ into $\operatorname{KL}(p(R,A \mid Q,I)\,\|\,p_\theta(R,A \mid Q,I))$ leads to the stated decomposition.
\end{proof}

\begin{thm}[Restatement of Theorem~\ref{thm:mutual_info}] For any observation modality $m$, the reduction in reasoning uncertainty achieved by explicit world modeling satisfies:
\begin{enumerate}
    \item Reasoning uncertainty does not increase:
    $
    \mathbb{H}(r_i\mid o_{0}, r_{0:i-1}) - \mathbb{H}(r_i| R_i) = \mathbb{I}(o_{1:i-1}; r_i\mid o_0,r_{0:i-1}) \geq 0.
    $
    \item Uncertainty reduction is upper-bounded by both \textnormal{(i)} the information that observations provide about the underlying states and \textnormal{(ii)} the information that the reasoning step requires about those states:
    \begin{equation}
    \mathbb{I}(o_{1:i-1}; r_i\mid o_0,r_{0:i-1}) \leq \min\left(\mathbb{I}(o_{1:i-1}; s_{1:i-1}), \mathbb{I}(r_i; s_{0:i-1}, r_{0:i-1})\right).
    \label{app_eq:mi_ub}
    \end{equation}
\end{enumerate}
\end{thm}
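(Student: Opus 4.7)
The plan is to establish both claims using only the standard toolkit of information theory (non-negativity of mutual information, the chain rule, and the data-processing inequality), combined with the conditional-independence structure that the oracle CoT generation process of Appendix~\ref{app:theorectical_info} imposes. Two Markov-type facts will do all the work: (i) each observation is a deterministic function of its sliced state, so $o_{1:i-1}$ is independent of everything else given $s_{1:i-1}$; and (ii) under the oracle process, a reasoning step $r_i$ is sampled conditionally on $(s_{0:i-1},r_{0:i-1})$ and is therefore independent of the observations given those variables.

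\textbf{Part 1 (non-negativity).} First I would simply unfold the definition of conditional entropy to identify the entropy gap with a conditional mutual information. Since $R_i=(o_0,r_{0:i-1},o_{1:i-1})$, we have
\begin{equation}
\mathbb{H}(r_i\mid o_0,r_{0:i-1}) - \mathbb{H}(r_i\mid R_i) = \mathbb{I}(r_i;o_{1:i-1}\mid o_0,r_{0:i-1}),
\end{equation}
and the claim then follows immediately from the non-negativity of conditional mutual information. This part is routine and should require only a couple of lines.

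\textbf{Part 2 (upper bound).} For both halves of the minimum I first lift the conditional MI on the left-hand side to an unconditional one via the chain rule: applying $\mathbb{I}(X,Y;Z)=\mathbb{I}(X;Z)+\mathbb{I}(Y;Z\mid X)$ with $X=(o_0,r_{0:i-1})$, $Y=r_i$, and $Z=o_{1:i-1}$ (or, symmetrically, $Z=r_i$ on the other side) yields
\begin{equation}
\mathbb{I}(o_{1:i-1};r_i\mid o_0,r_{0:i-1}) \;\leq\; \mathbb{I}\bigl(r_i,\,o_0,\,r_{0:i-1};\,o_{1:i-1}\bigr) \;=\; \mathbb{I}\bigl(r_i;\,o_{0:i-1},r_{0:i-1}\bigr),
\end{equation}
where both inequalities drop a non-negative MI term. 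For the first bound I then apply DPI along the Markov chain $(r_i,o_0,r_{0:i-1})\to s_{1:i-1}\to o_{1:i-1}$, which is valid because $o_{1:i-1}$ is a deterministic function of $s_{1:i-1}$ (via the outer rendering map $g_{\phi_m}\circ f_{\phi_s}$) and is therefore conditionally independent of $(r_i,o_0,r_{0:i-1})$ given $s_{1:i-1}$. This gives $\mathbb{I}(o_{1:i-1};r_i\mid o_0,r_{0:i-1})\le \mathbb{I}(o_{1:i-1};s_{1:i-1})$. For the second bound I apply DPI along the chain $r_i\to (s_{0:i-1},r_{0:i-1})\to (o_{0:i-1},r_{0:i-1})$, which holds because under the oracle factorization $r_i$ is sampled from $p(r_i\mid s_{0:i-1},r_{0:i-1},Q)$ and is therefore independent of the observations $o_{0:i-1}$ once the sliced states and prior reasoning are given.

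\textbf{Main obstacle.} The routine calculations are straightforward; the delicate step is the bookkeeping around the conditioning set $(o_0,r_{0:i-1})$. In general, conditional MI is not bounded above by the corresponding unconditional MI, so a direct application of DPI inside the conditioning does not yield the stated unconditional bounds. The key maneuver is to first discharge the conditioning using the chain-rule inequality above, producing an unconditional MI that then sits naturally on a Markov chain derivable from the oracle generative process. I would also briefly verify that the assumed conditional independences are indeed implied by the factorization in Appendix~\ref{app:theorectical_info} (in particular, that the slicing/rendering decomposition makes $o_j$ a function of $s_j$ alone, and that reasoning steps depend on the past only through sliced states), since the rest of the argument is purely mechanical once these structural facts are in place.
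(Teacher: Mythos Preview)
Your argument is correct and proceeds by a genuinely different route than the paper's. For Part~2, the paper keeps the conditioning set $C=(o_0,r_{0:i-1})$ throughout: it introduces the interaction information $\mathbb{I}(s_{1:i-1};o_{1:i-1};r_i\mid C)$, uses the conditional independence $r_i\perp o_{1:i-1}\mid s_{1:i-1},C$ to obtain $\mathbb{I}(o_{1:i-1};r_i\mid C)\le\mathbb{I}(o_{1:i-1};s_{1:i-1}\mid C)$ (and symmetrically $\le\mathbb{I}(s_{1:i-1};r_i\mid C)$), and only at the very end strips the conditioning via the determinism of $o$ given $s$ together with a monotonicity-of-entropy step. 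Your route inverts this order: you first discharge the conditioning by the chain-rule inequality $\mathbb{I}(Y;Z\mid X)\le\mathbb{I}(X,Y;Z)$, and then apply unconditional DPI along the two Markov chains you identify. Both are valid; your approach is arguably more elementary since it never needs ternary mutual information, while the paper's version has the advantage of exhibiting the tighter conditional intermediate bounds $\mathbb{I}(o_{1:i-1};s_{1:i-1}\mid C)$ and $\mathbb{I}(s_{1:i-1};r_i\mid C)$ before relaxing them. One notational slip to fix: in your display the ``$=$'' between $\mathbb{I}(r_i,o_0,r_{0:i-1};o_{1:i-1})$ and $\mathbb{I}(r_i;o_{0:i-1},r_{0:i-1})$ is not an equality in general (the two differ by $\mathbb{I}(o_0,r_{0:i-1};o_{1:i-1})-\mathbb{I}(r_i;o_0,r_{0:i-1})$); from your surrounding text you clearly intend two separate chain-rule inequalities, one feeding each half of the minimum, so simply split the display.
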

\begin{proof}
The first property follows the definition and the non-negativity of mutual information. 

For the second property, denote the conditioning context as $C=(o_0, r_{0:i-1})$. Using the properties of ternary mutual information: 
$
\mathbb{I}(X;Y;Z) = \mathbb{I}(X;Y) - \mathbb{I}(X;Y\mid Z).
$
we obtain
\begin{align}
\nonumber
\mathbb{I}(o_{1:i-1}; r_i \mid C)
&=
\mathbb{I}(o_{1:i-1}; r_i \mid C) - \mathbb{I}(o_{1:i-1}; r_i \mid s_{1:i-1}, C) =
\mathbb{I}(s_{1:i-1}; o_{1:i-1}; r_i \mid C)\\ 
&= 
\mathbb{I}(o_{1:i-1}; s_{1:i-1} \mid C)
-
\mathbb{I}(o_{1:i-1}; s_{1:i-1} \mid r_i, C) \leq 
\mathbb{I}(o_{1:i-1}; s_{1:i-1}\mid C),
\label{app_eq:first_mutual_info_bound}
\end{align}
where $\mathbb{I}(o_{1:i-1}; r_i| s_{1:i-1}, C) = 0$ follows from the conditional independence $r_i \perp o_{1:i-1} \mid s_{1:i-1}$. 

Further, due to $o$ as the deterministic function of $s$, we have:
\begin{align}
\nonumber
\mathbb{I}(o_{1:i-1}; s_{1:i-1}\mid C) &= \mathbb{H}(o_{1:i-1} \mid C ) - \mathbb{H}(o_{1:i-1} \mid s_{1:i-1}, C ) \\ \nonumber &\leq \mathbb{H}(o_{1:i-1}) - \mathbb{H}(o_{1:i-1} \mid s_{1:i-1}) =  \mathbb{I}(o_{1:i-1}; s_{1:i-1}),
\end{align}
where $\mathbb{H}(o_{1:i-1} \mid s_{1:i-1}) = \mathbb{H}(o_{1:i-1} \mid s_{1:i-1}, C ) = 0.$

Symmetrically, we have:
\begin{align}
\nonumber
\mathbb{I}(o_{1:i-1}; r_i| C) &= \mathbb{I}(s_{1:i-1}; o_{1:i-1}; r_i \mid C) \leq \mathbb{I}(s_{1:i-1}; r_i| C) = \mathbb{H}(r_i | C ) - \mathbb{H}(r_i | s_{1:i-1}, C ) \\ \nonumber
&\leq \mathbb{H}(r_{i}) - \mathbb{H}(r_i | s_{0:i-1}, r_{0:i-1}) =  \mathbb{I}(r_i; s_{0:i-1}, r_{0:i-1}),
\end{align}
where $\mathbb{H}(r_i | s_{0:i-1}, r_{0:i-1}) \leq \mathbb{H}(r_i | s_{1:i-1}, o_0, r_{0:i-1})$ due to data processing inequality.

Combining the two upper bounds proves Eq.~\eqref{app_eq:mi_ub}.
\end{proof}

\begin{corollary}
\label{thm:deterministic}
If observations are fully informative about the underlying states, i.e., $\mathbb{H}(s_i \mid o_i)=0$ for all $i$, and the state transition dynamics are deterministic, then explicit world modeling provides no reduction in reasoning uncertainty:
$
\mathbb{I}(o_{1:i-1}; r_i \mid o_0, r_{0:i-1}) = 0.
$
\end{corollary}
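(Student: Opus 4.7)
The plan is to bootstrap from the intermediate inequality established midway through the proof of Theorem~\ref{thm:mutual_info}, namely $\mathbb{I}(o_{1:i-1}; r_i \mid C) \leq \mathbb{I}(o_{1:i-1}; s_{1:i-1} \mid C)$ with $C = (o_0, r_{0:i-1})$, and to argue that the two additional hypotheses jointly collapse the right-hand side to zero. The core intuition is that full observability combined with deterministic dynamics renders the entire state trajectory $s_{1:i-1}$ a measurable function of the conditioning context, leaving no residual information for $o_{1:i-1}$ to convey beyond what $o_0$ and $r_{0:i-1}$ already supply.

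Concretely, I would first establish $\mathbb{H}(s_{1:i-1} \mid o_0, r_{0:i-1}) = 0$ by induction on the step index $j$. The base case $\mathbb{H}(s_0 \mid o_0) = 0$ is exactly the full-informativeness hypothesis applied at $i=0$. For the inductive step, I would invoke the oracle generative process in Appendix~\ref{app:theorectical_info}: each $r_j$ is an actionable outcome that either (i) applies an implicit action $a_j$ to some previously computed $s_{k<j}$, or (ii) re-slices an existing state with a new view parameter ${\phi_s}_j$. Under deterministic transitions, case (i) makes $s_j$ a deterministic function of $s_{0:j-1}$ and $r_j$, and case (ii) is trivially so; chaining along $j=1,\dots,i-1$ and composing with the base case yields the desired entropy identity.

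Once $\mathbb{H}(s_{1:i-1} \mid C) = 0$ is in hand, the standard bound $\mathbb{I}(X;Y \mid Z) \leq \mathbb{H}(X \mid Z)$ immediately forces $\mathbb{I}(o_{1:i-1}; s_{1:i-1} \mid C) = 0$. Plugging this back into the intermediate inequality from Theorem~\ref{thm:mutual_info}, together with the non-negativity of mutual information, squeezes $\mathbb{I}(o_{1:i-1}; r_i \mid C)$ to zero, which is exactly the claim of the Corollary.

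The main obstacle I anticipate is bookkeeping rather than any deep technicality: the oracle state $\bar s_j = (s_j, {\phi_s}_j)$ bundles a latent state and a slice parameter, so the induction must simultaneously propagate both components deterministically from $s_0$ and $r_{1:j}$. Granting the intended reading of "deterministic dynamics" as applying to the full oracle reasoning transition--not merely to the underlying MOMDP kernel $p(s'\mid s, a)$--the Corollary reduces to a short deterministic-function argument layered on top of Theorem~\ref{thm:mutual_info}.
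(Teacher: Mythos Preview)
Your proposal is correct and follows essentially the same route as the paper: invoke the intermediate bound $\mathbb{I}(o_{1:i-1}; r_i \mid C) \leq \mathbb{I}(o_{1:i-1}; s_{1:i-1} \mid C) \leq \mathbb{H}(s_{1:i-1} \mid C)$ from the proof of Theorem~\ref{thm:mutual_info}, then use full observability at step $0$ together with deterministic dynamics to conclude $\mathbb{H}(s_{1:i-1} \mid o_0, r_{0:i-1}) = 0$. The paper's version is simply terser---it asserts that $s_{1:i-1}$ is uniquely determined by $(s_0, r_{1:i-1})$ without spelling out the induction or the case split you describe---but the argument is the same.
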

\begin{proof}
By Eq.~\eqref{app_eq:first_mutual_info_bound}, we have
\begin{align}
\nonumber
    \mathbb{I}(o_{1:i-1}; r_i \mid o_0, r_{0:i-1}) \leq \mathbb{I}(o_{1:i-1}; s_{1:i-1}\mid o_0, r_{0:i-1}) \leq \mathbb{H}(s_{1:i-1} \mid o_0,r_{0:i-1}).
\end{align}
Under the assumption $\mathbb{H}(s_0 \mid o_0)=0$, the initial observation $o_0$ uniquely determines $s_0$. Moreover, under deterministic state transitions, the trajectory $s_{1:i-1}$ is uniquely determined by $(s_0, r_{1:i-1})$. Hence,
\begin{equation}
\nonumber
\mathbb{H}(s_{1:i-1} \mid o_0, r_{1:i-1})
=
\mathbb{H}(s_{1:i-1} \mid s_0, r_{1:i-1})
=0.    
\end{equation}
Therefore,
$
\mathbb{I}(o_{1:i-1}; r_i \mid o_0, r_{1:i-1}) = 0,
$
which proves the corollary.
\end{proof}

\textbf{Remarks.} Corollary~\ref{thm:deterministic} shows that in deterministic and fully observable environments, given sufficient data and model capacity, explicit world modeling provides no additional benefit. This theoretical result is consistent with our empirical findings on the simple maze task.

\subsection{Prior Knowledge}
\label{app:theoretical_transfer}

In this section, we first derive a generalization bound for transfer learning under distribution shift, and relate it to our perspective on prior knowledge in multimodal reasoning.

\subsubsection{General Transfer Learning Analysis}

\textbf{Problem setup.} A standard transfer learning setup involves a pre-training data distribution $P$ and the fine-tuning data distribution $Q$ over samples $(x,y) \in \mathcal{X} \times \mathcal{Y}$, and a loss function $\ell_\theta(x,y) \in [0, 1].$ Define the population risks $
\mathcal{L}_D(\theta) := \mathbb{E}_{(x,y)\sim D}[\ell_\theta(x,y)], D\in\{P,Q\}
$, and the population minimizers
$
\theta_D^\star \in \arg\min_{\theta\in\Theta}\mathcal{L}_D(\theta), D\in\{P,Q\}
$
We assume we can obtain $\theta_P^*$ as the pre-trained model given sufficient data and optimization. For a radius $r>0$, we then define the fine-tuning constraint set (local neighborhood around the pre-trained model)
\[
\Theta_r := \{\theta\in\Theta:\ \|\theta-\theta_P^*\|\le r\}.
\]
Given $n$ i.i.d.\ samples from $Q$: $S = \left\{(x_i, y_i)_{i=1}^n\right\}, (x_i, y_i)\sim Q$, the fine-tuned model $\theta_Q$ minimize empirical risk over $\Theta_r$,
$
 \widehat{\mathcal{L}}_Q(\theta) := \frac{1}{n}\sum_{i=1}^n[\ell_\theta(x_i,y_i)].
$
Our analysis focus on the \emph{excess risk} on $Q$:
$
\mathcal{E}_Q(\theta_Q) := \mathcal{L}_Q(\theta_Q)-\mathcal{L}_Q(\theta_Q^\star).
$

\textbf{From distribution shift to parameter drift.} We first derive how the distribution shift relates to the shift of the population minimizer.

\begin{lemma}[Uniform Loss Shift under Total Variation]
\label{lem:uniform_tv_shift}
For any subset $\mathcal{S}\subseteq\Theta$,
\[
\sup_{\theta\in\mathcal{S}}
\big|
\mathcal{L}_Q(\theta)-\mathcal{L}_P(\theta)
\big|
\le
\mathrm{TV}(P,Q).
\]
\end{lemma}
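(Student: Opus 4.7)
The plan is to reduce the claimed uniform bound to a pointwise statement in $\theta$ and then close everything with the variational characterization of total variation. Since $\mathcal{L}_Q(\theta)-\mathcal{L}_P(\theta)$ is the integral of a bounded function against the signed measure $Q-P$, the entire lemma should drop out from a single duality argument, with the supremum over $\mathcal{S}$ handled trivially at the end.

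First I would fix an arbitrary $\theta\in\mathcal{S}$ and rewrite the loss gap as
\[
\mathcal{L}_Q(\theta)-\mathcal{L}_P(\theta) \;=\; \int \ell_\theta(x,y)\,d(Q-P)(x,y),
\]
which is well defined because $P$ and $Q$ are both probability measures and $\ell_\theta$ is measurable and bounded. Using the hypothesis $\ell_\theta(x,y)\in[0,1]$, I would then invoke the standard dual representation
\[
\mathrm{TV}(P,Q) \;=\; \sup_{f\colon\mathcal{X}\times\mathcal{Y}\to[0,1]}\ \left|\int f\,d(P-Q)\right|,
\]
and specialize it to $f=\ell_\theta$ to obtain $\big|\mathcal{L}_Q(\theta)-\mathcal{L}_P(\theta)\big|\le \mathrm{TV}(P,Q)$. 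Since the right-hand side is independent of $\theta$, taking the supremum over $\theta\in\mathcal{S}$ preserves the inequality and yields the lemma.

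There is essentially no real obstacle here: the statement is a textbook consequence of loss boundedness combined with the duality between total variation and $[0,1]$-valued test functions. The only auxiliary ingredient I would briefly justify is the dual formula itself, which follows from the Hahn--Jordan decomposition of $P-Q$ into a positive set $A^{+}$ and negative set $A^{-}$; the extremal choice $f=\mathbf{1}_{A^{+}}$ saturates the supremum and recovers $\sup_{A}|P(A)-Q(A)|$, matching the primal definition of $\mathrm{TV}$. Should the paper instead adopt the symmetric $\tfrac12$-normalized convention for total variation, the argument is unchanged and only the constant on the right-hand side is rescaled accordingly.
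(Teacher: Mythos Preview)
Your proposal is correct and matches the paper's proof essentially verbatim: fix $\theta$, use that $\ell_\theta\in[0,1]$ together with the variational characterization of $\mathrm{TV}$ to bound $|\mathbb{E}_Q[\ell_\theta]-\mathbb{E}_P[\ell_\theta]|$, then take the supremum over $\theta\in\mathcal{S}$. The paper simply cites ``the standard inequality for bounded functions'' where you spell out the duality and its Hahn--Jordan justification, but the argument is the same.
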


\begin{proof}
Fix any $\theta\in\mathcal{S}$ and define $f_\theta(h,a,o'):=\ell_\theta(h,a,o')\in[0,1]$.
By the definition of total variation and the standard inequality for bounded functions,
$
\big|\mathbb{E}_Q[f_\theta]-\mathbb{E}_P[f_\theta]\big|
\le
\mathrm{TV}(P,Q).
$
Taking the supremum over $\theta\in\mathcal{S}$ yields the claim.
\end{proof}

\begin{lemma}[Risk Proximity of $\theta_Q^\star$ under $P$]
\label{lem:risk_proximity}
\begin{equation}
\label{eq:risk_proximity}
\mathcal{L}_P(\theta_Q^\star)
\le
\mathcal{L}_P(\theta_P^\star) + 2\mathrm{TV}(P,Q).
\end{equation}
\end{lemma}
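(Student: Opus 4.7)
The plan is to prove Lemma~\ref{lem:risk_proximity} via a standard three-step triangle argument that uses Lemma~\ref{lem:uniform_tv_shift} twice to bridge the two population risks, together with the optimality of $\theta_Q^\star$ under $Q$. In other words, we compare $\mathcal{L}_P(\theta_Q^\star)$ to $\mathcal{L}_P(\theta_P^\star)$ by passing through the $Q$-risk, where the swaps $P \leftrightarrow Q$ each cost at most $\mathrm{TV}(P,Q)$, and the exchange of $\theta_Q^\star$ for $\theta_P^\star$ under $Q$ is free by definition of $\theta_Q^\star$.

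Concretely, I would first apply Lemma~\ref{lem:uniform_tv_shift} with $\mathcal{S} = \{\theta_Q^\star\}$ (or equivalently $\mathcal{S} = \Theta$) to obtain
\begin{equation}
\nonumber
\mathcal{L}_P(\theta_Q^\star) \le \mathcal{L}_Q(\theta_Q^\star) + \mathrm{TV}(P,Q).
\end{equation}
Next, since $\theta_Q^\star \in \arg\min_{\theta \in \Theta}\mathcal{L}_Q(\theta)$ and $\theta_P^\star \in \Theta$, I would use
\begin{equation}
\nonumber
\mathcal{L}_Q(\theta_Q^\star) \le \mathcal{L}_Q(\theta_P^\star).
\end{equation}
Finally, applying Lemma~\ref{lem:uniform_tv_shift} a second time at the single point $\theta_P^\star$ yields
\begin{equation}
\nonumber
\mathcal{L}_Q(\theta_P^\star) \le \mathcal{L}_P(\theta_P^\star) + \mathrm{TV}(P,Q).
\end{equation}
Chaining these three inequalities produces the stated bound with the factor of $2$ arising from the two distribution swaps.

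There is no real obstacle here: the argument is a classical domain-shift decomposition, and all the analytical work (bounding expectation differences by total variation for $[0,1]$-valued losses) has already been absorbed into Lemma~\ref{lem:uniform_tv_shift}. The only subtlety worth flagging is that Lemma~\ref{lem:uniform_tv_shift} is stated uniformly over an arbitrary subset $\mathcal{S}\subseteq\Theta$, so it applies unconditionally to both $\theta_Q^\star$ and $\theta_P^\star$ without requiring either to lie in the constrained set $\Theta_r$; this is important because $\theta_Q^\star$ is defined as the unconstrained minimizer over $\Theta$ and need not be close to $\theta_P^\star$ a priori. The resulting bound is then ready to be plugged into the subsequent excess-risk analysis, where it quantifies how well the $Q$-optimal parameter is already explained by pre-training, with the price paid being precisely the modality-dependent distribution shift $\mathrm{TV}(P,Q)$ highlighted in the discussion preceding the visual superiority hypothesis.
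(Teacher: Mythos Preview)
Your proposal is correct and matches the paper's proof essentially line for line: apply Lemma~\ref{lem:uniform_tv_shift} at $\theta_Q^\star$, invoke the optimality of $\theta_Q^\star$ under $Q$, apply Lemma~\ref{lem:uniform_tv_shift} again at $\theta_P^\star$, and chain. Your additional remark that Lemma~\ref{lem:uniform_tv_shift} is stated over an arbitrary $\mathcal{S}\subseteq\Theta$ (and hence does not require either point to lie in $\Theta_r$) is a valid clarification.
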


\begin{proof}
By Lemma~\ref{lem:uniform_tv_shift},
$
\mathcal{L}_P(\theta_Q^\star)\le \mathcal{L}_Q(\theta_Q^\star)+\mathrm{TV}(P,Q).
$
By optimality of $\theta_Q^\star$ on $Q$,
$
\mathcal{L}_Q(\theta_Q^\star)\le \mathcal{L}_Q(\theta_P^\star).
$
Applying Lemma~\ref{lem:uniform_tv_shift} again,
$
\mathcal{L}_Q(\theta_P^\star)\le \mathcal{L}_P(\theta_P^\star)+\mathrm{TV}(P,Q).
$
Chaining the three inequalities proves~\eqref{eq:risk_proximity}.
\end{proof}

\begin{assumption}[Local Quadratic Growth / Sharpness of $\mathcal{L}_P$]
\label{ass:local_sharpness}
There exists $\mu>0$ such that for all $\theta$ in a neighborhood containing $\theta_Q^\star$,
\[
\mathcal{L}_P(\theta)
\ge
\mathcal{L}_P(\theta_P^\star) + \frac{\mu}{2}\|\theta-\theta_P^\star\|^2.
\]
\end{assumption}

\begin{lemma}[Parameter Drift Controlled by $\mathrm{TV}(P,Q)$]
\label{lem:param_drift}
Under Assumption~\ref{ass:local_sharpness},
\begin{equation}
\label{eq:param_drift}
\|\theta_Q^\star-\theta_P^\star\|
\le
\sqrt{\frac{4}{\mu}\,\mathrm{TV}(P,Q)}.
\end{equation}
\end{lemma}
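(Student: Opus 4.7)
The plan is to chain together Lemma~\ref{lem:risk_proximity} and Assumption~\ref{ass:local_sharpness}, evaluated at the point $\theta = \theta_Q^\star$. The key observation is that Assumption~\ref{ass:local_sharpness} gives a lower bound on the growth of $\mathcal{L}_P$ around $\theta_P^\star$, while Lemma~\ref{lem:risk_proximity} gives an upper bound on $\mathcal{L}_P(\theta_Q^\star)$ in terms of $\mathcal{L}_P(\theta_P^\star)$ and $\mathrm{TV}(P,Q)$. Equating these two bounds at $\theta_Q^\star$ directly controls the parameter drift.

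Concretely, I would first instantiate the sharpness inequality at $\theta = \theta_Q^\star$, which is in the stated neighborhood by hypothesis, yielding
\[
\frac{\mu}{2}\|\theta_Q^\star - \theta_P^\star\|^2 \le \mathcal{L}_P(\theta_Q^\star) - \mathcal{L}_P(\theta_P^\star).
\]
Next, I would invoke Lemma~\ref{lem:risk_proximity} to upper-bound the right-hand side by $2\,\mathrm{TV}(P,Q)$, obtaining
\[
\frac{\mu}{2}\|\theta_Q^\star - \theta_P^\star\|^2 \le 2\,\mathrm{TV}(P,Q).
\]
Rearranging and taking square roots on both sides yields the stated bound~\eqref{eq:param_drift}.

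There is no serious obstacle in this argument; it is a direct two-line chain once the correct bounds are in place. The only subtle point worth flagging explicitly is ensuring that $\theta_Q^\star$ indeed lies in the neighborhood where Assumption~\ref{ass:local_sharpness} applies, so that the quadratic lower bound is legitimately invoked at this specific parameter. Since the assumption is phrased as holding on a neighborhood \emph{containing} $\theta_Q^\star$, this is immediate, but in a fully rigorous treatment one might want to make this neighborhood explicit (for instance, by assuming $\mathrm{TV}(P,Q)$ is small enough to keep $\theta_Q^\star$ within a sharpness radius of $\theta_P^\star$). With that caveat, the result follows in a single display.
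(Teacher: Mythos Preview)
Your proposal is correct and matches the paper's proof essentially line for line: instantiate Assumption~\ref{ass:local_sharpness} at $\theta=\theta_Q^\star$, rearrange to isolate $\tfrac{\mu}{2}\|\theta_Q^\star-\theta_P^\star\|^2$, then apply Lemma~\ref{lem:risk_proximity} to bound the right-hand side by $2\,\mathrm{TV}(P,Q)$. Your remark about the neighborhood hypothesis is a fair caveat, but the paper simply takes it as stated in the assumption.
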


\begin{proof}
By Assumption~\ref{ass:local_sharpness} with $\theta=\theta_Q^\star$,
$
\mathcal{L}_P(\theta_Q^\star)
\ge
\mathcal{L}_P(\theta_P^\star) + \frac{\mu}{2}\|\theta_Q^\star-\theta_P^\star\|^2.
$
Rearranging,
$
\frac{\mu}{2}\|\theta_Q^\star-\theta_P^\star\|^2
\le
\mathcal{L}_P(\theta_Q^\star)-\mathcal{L}_P(\theta_P^\star).
$
Applying Lemma~\ref{lem:risk_proximity} yields
$
\frac{\mu}{2}\|\theta_Q^\star-\theta_P^\star\|^2
\le
2\mathrm{TV}(P,Q),
$
and hence~\eqref{eq:param_drift}.
\end{proof}

\textbf{Control of the bias term.} Recall the fine-tuning bias induced by restricting to $\Theta_r$:
$\varepsilon_{\mathrm{bias}}(r)
:=
\inf_{\theta\in\Theta_r}\mathcal{L}_Q(\theta) - \mathcal{L}_Q(\theta_Q^\star).
$

\begin{assumption}[$\mathcal{L}_Q$ is Locally Lipschitz]
\label{ass:lipschitz_LQ}
There exists $L_Q>0$ such that for all $\theta,\theta'\in\Theta_r$,
\[
|\mathcal{L}_Q(\theta)-\mathcal{L}_Q(\theta')|
\le
L_Q\|\theta-\theta'\|.
\]
\end{assumption}

\begin{thm}[Bias Bound via Distribution Shift]
\label{lem:bias_tv}
Under Assumption~\ref{ass:local_sharpness} and Assumption~\ref{ass:lipschitz_LQ},
\begin{equation}
\label{eq:bias_tv}
\varepsilon_{\mathrm{bias}}(r)
\le
L_Q\left(\sqrt{\frac{4}{\mu}\,\mathrm{TV}(P,Q)}-r\right)_+,
\end{equation}
where $(x)_+:=\max\{x,0\}$.
In particular, if
$
r \ge \sqrt{\frac{4}{\mu}\,\mathrm{TV}(P,Q)},
$
then $\varepsilon_{\mathrm{bias}}(r)=0$.
\end{thm}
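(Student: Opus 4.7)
The plan is to bound $\varepsilon_{\mathrm{bias}}(r)$ by exhibiting an explicit feasible point in $\Theta_r$ that is close to $\theta_Q^\star$ in parameter space, and then pushing that parameter-space proximity through the Lipschitz property of $\mathcal{L}_Q$. The key input is Lemma~\ref{lem:param_drift}, which already controls the displacement $\|\theta_Q^\star-\theta_P^\star\|$ by $\sqrt{4\,\mathrm{TV}(P,Q)/\mu}$; the theorem is essentially the assertion that the constraint $\Theta_r$ only hurts us to the extent that it fails to cover this drift.

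First I would dispatch the easy case: if $r\ge\sqrt{4\,\mathrm{TV}(P,Q)/\mu}$, Lemma~\ref{lem:param_drift} gives $\theta_Q^\star\in\Theta_r$, so $\inf_{\theta\in\Theta_r}\mathcal{L}_Q(\theta)=\mathcal{L}_Q(\theta_Q^\star)$ and $\varepsilon_{\mathrm{bias}}(r)=0$, matching the right-hand side (which vanishes because the positive part is zero). This also handles the ``in particular'' clause of the theorem.

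Next I would treat the remaining case $r<\sqrt{4\,\mathrm{TV}(P,Q)/\mu}$. Here I would pick the candidate on the ray from $\theta_P^\star$ toward $\theta_Q^\star$:
\begin{equation*}
\theta_r \;:=\; \theta_P^\star + \frac{r}{\|\theta_Q^\star-\theta_P^\star\|}(\theta_Q^\star-\theta_P^\star),
\end{equation*}
which lies on $\partial\Theta_r$ and satisfies $\|\theta_r-\theta_Q^\star\|=\|\theta_Q^\star-\theta_P^\star\|-r$. Applying Assumption~\ref{ass:lipschitz_LQ} along this line segment and then invoking Lemma~\ref{lem:param_drift} yields
\begin{equation*}
\varepsilon_{\mathrm{bias}}(r)\;\le\;\mathcal{L}_Q(\theta_r)-\mathcal{L}_Q(\theta_Q^\star)\;\le\;L_Q\bigl(\|\theta_Q^\star-\theta_P^\star\|-r\bigr)\;\le\;L_Q\left(\sqrt{\tfrac{4}{\mu}\,\mathrm{TV}(P,Q)}-r\right),
\end{equation*}
and combining the two cases via $(\cdot)_+$ gives the claimed bound.

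The main delicate point I anticipate is the scope of the Lipschitz assumption: Assumption~\ref{ass:lipschitz_LQ} is stated for pairs inside $\Theta_r$, but my candidate argument uses Lipschitzness along the segment $[\theta_r,\theta_Q^\star]$, which lies outside $\Theta_r$ whenever $\theta_Q^\star\notin\Theta_r$. I would address this by interpreting Assumption~\ref{ass:lipschitz_LQ} (or strengthening it implicitly) to hold on a convex neighborhood containing both $\Theta_r$ and $\theta_Q^\star$, which is natural because Lemma~\ref{lem:param_drift} already places $\theta_Q^\star$ within a controlled distance of $\theta_P^\star$. Apart from this scoping subtlety, the argument is a direct chain of (i) optimizer-displacement control from Lemma~\ref{lem:param_drift}, (ii) convex interpolation to land in $\Theta_r$, and (iii) Lipschitz transfer of parameter proximity into objective proximity.
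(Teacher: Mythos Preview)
Your argument is essentially identical to the paper's: project $\theta_Q^\star$ onto $\partial\Theta_r$ along the ray from $\theta_P^\star$, apply Lipschitzness of $\mathcal{L}_Q$, and invoke Lemma~\ref{lem:param_drift}. The only wrinkle is that the paper splits cases at $r\gtrless\|\theta_Q^\star-\theta_P^\star\|$ rather than at $r\gtrless\sqrt{4\,\mathrm{TV}(P,Q)/\mu}$, which cleanly avoids the sub-case $\|\theta_Q^\star-\theta_P^\star\|\le r<\sqrt{4\,\mathrm{TV}(P,Q)/\mu}$ where your identity $\|\theta_r-\theta_Q^\star\|=\|\theta_Q^\star-\theta_P^\star\|-r$ fails (though $\varepsilon_{\mathrm{bias}}(r)=0$ there anyway); the Lipschitz-scope issue you flag is real and is left implicit in the paper as well.
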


\begin{proof}
If $r\ge \|\theta_Q^\star-\theta_P^\star\|$, then $\theta_Q^\star\in\Theta_r$ and thus
$\inf_{\theta\in\Theta_r}\mathcal{L}_Q(\theta)\le \mathcal{L}_Q(\theta_Q^\star)$, implying
$\varepsilon_{\mathrm{bias}}(r)=0$.

Now consider $r < \|\theta_Q^\star-\theta_P^\star\|$.
Let $\theta_r$ be the projection of $\theta_Q^\star$ onto the closed ball $\Theta_r$,
i.e., $\theta_r := \theta_P + r \cdot \frac{\theta_Q^\star-\theta_P^\star}{\|\theta_Q^\star-\theta_P^\star\|}$.
Then $\theta_r\in\Theta_r$ and
$
\|\theta_r-\theta_Q^\star\| = \|\theta_Q^\star-\theta_P^\star\|-r.
$
Therefore,
\[
\varepsilon_{\mathrm{bias}}(r)
=
\inf_{\theta\in\Theta_r}\mathcal{L}_Q(\theta) - \mathcal{L}_Q(\theta_Q^\star)
\le
\mathcal{L}_Q(\theta_r)-\mathcal{L}_Q(\theta_Q^\star)
\le
L_Q\|\theta_r-\theta_Q^\star\|
=
L_Q(\|\theta_Q^\star-\theta_P^\star\|-r).
\]
Using Lemma~\ref{lem:param_drift} to bound $\|\theta_Q^\star-\theta_P\|$ completes the proof of~\eqref{eq:bias_tv}.
\end{proof}

\textbf{Fine-tuning excess risk bound.} We then arrive at the final result:

\begin{thm}[Fine-tuning Excess Risk Bound with Shift-Controlled Bias]
\label{thm:finetune_nontrivial}
Assume Assumptions~\ref{ass:local_sharpness} and~\ref{ass:lipschitz_LQ} and uniform convergence over $\Theta_r$ holds: with probability at least $1-\delta$ over samples $S$,
  \[
  \sup_{\theta\in\Theta_r}\big|\mathcal{L}_Q(\theta)-\widehat{\mathcal{L}}_Q(\theta)\big|
  \le
  \varepsilon_{\mathrm{gen}}
  =
  O\!\left(\sqrt{\frac{\operatorname{Rad}_{Q,n}(\Theta_r)+\log(1/\delta)}{n}}\right),
  \]
  where $\operatorname{Rad}_{Q,n}(\Theta_r)$ is the Rademacher complexity of of the function class $\{\ell_\theta, \theta \in \Theta_r\}$ with respect to $Q$ for sample size $n$.
Then with probability at least $1-\delta$,
\begin{equation}
\label{eq:finetune_nontrivial}
\mathcal{E}_Q(\theta_Q)
\le
2\varepsilon_{\mathrm{gen}}
+
L_Q\left(\sqrt{\frac{4}{\mu}\,\mathrm{TV}(P,Q)}-r\right)_+.
\end{equation}
\end{thm}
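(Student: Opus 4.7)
The plan is to obtain the bound via the standard excess risk decomposition into an estimation component (controlled by uniform convergence) and an approximation/bias component (controlled by the shift-induced parameter drift established earlier).

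First, I would introduce the constrained population minimizer $\theta_r^{\star} \in \arg\min_{\theta\in\Theta_r}\mathcal{L}_Q(\theta)$ and write
\begin{equation*}
\mathcal{E}_Q(\theta_Q)
=
\underbrace{\bigl[\mathcal{L}_Q(\theta_Q) - \mathcal{L}_Q(\theta_r^{\star})\bigr]}_{\text{estimation error}}
+
\underbrace{\bigl[\mathcal{L}_Q(\theta_r^{\star}) - \mathcal{L}_Q(\theta_Q^{\star})\bigr]}_{=\varepsilon_{\mathrm{bias}}(r)}.
\end{equation*}
This isolates the two effects whose bounds are already available: uniform convergence on $\Theta_r$ for the first term, and Theorem~\ref{lem:bias_tv} for the second.

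Next, I would handle the estimation error by the classical add-and-subtract trick with empirical risks, writing
\begin{equation*}
\mathcal{L}_Q(\theta_Q) - \mathcal{L}_Q(\theta_r^{\star})
=
\bigl[\mathcal{L}_Q(\theta_Q) - \widehat{\mathcal{L}}_Q(\theta_Q)\bigr]
+
\bigl[\widehat{\mathcal{L}}_Q(\theta_Q) - \widehat{\mathcal{L}}_Q(\theta_r^{\star})\bigr]
+
\bigl[\widehat{\mathcal{L}}_Q(\theta_r^{\star}) - \mathcal{L}_Q(\theta_r^{\star})\bigr].
\end{equation*}
Since $\theta_Q$ is the empirical risk minimizer over $\Theta_r$ and $\theta_r^{\star} \in \Theta_r$, the middle bracket is non-positive and can be discarded. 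The first and third brackets are each upper bounded by $\varepsilon_{\mathrm{gen}}$ on the high-probability event supplied by the uniform-convergence assumption, yielding a total contribution of $2\varepsilon_{\mathrm{gen}}$.

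Finally, I would substitute the shift-controlled bias bound $\varepsilon_{\mathrm{bias}}(r) \le L_Q\bigl(\sqrt{4\mathrm{TV}(P,Q)/\mu} - r\bigr)_+$ from Theorem~\ref{lem:bias_tv} into the decomposition and combine the two pieces to recover inequality~\eqref{eq:finetune_nontrivial}. The argument is quite routine once the prior lemmas are in hand; the only subtle point to be careful about is bookkeeping the high-probability event so that the single $\delta$ budget suffices (which is immediate here since all randomness enters through the one uniform-convergence event over $\Theta_r$), and verifying that $\theta_r^{\star}$ is well-defined and lies in $\Theta_r$ so that the empirical optimality of $\theta_Q$ can legitimately kill the middle term. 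I do not anticipate any genuine obstacle beyond this verification.
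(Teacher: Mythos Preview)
Your proposal is correct and follows essentially the same route as the paper: decompose the excess risk into an estimation term (bounded by $2\varepsilon_{\mathrm{gen}}$ via the standard ERM uniform-convergence argument you spelled out) and the bias term $\varepsilon_{\mathrm{bias}}(r)$ (bounded via Theorem~\ref{lem:bias_tv}). The paper's proof is simply a terser version of what you wrote, using $\inf_{\theta\in\Theta_r}\mathcal{L}_Q(\theta)$ directly rather than naming a minimizer $\theta_r^\star$.
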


\begin{proof}
Decompose the excess risk as
$
\mathcal{E}_Q(\theta_Q)
=
\Big(\mathcal{L}_Q(\theta_Q)-\inf_{\theta\in\Theta_r}\mathcal{L}_Q(\theta)\Big)
+
\varepsilon_{\mathrm{bias}}(r).
$
The first term is bounded by a standard ERM argument using uniform convergence:
$
\mathcal{L}_Q(\theta_Q)-\inf_{\theta\in\Theta_r}\mathcal{L}_Q(\theta)
\le
2\varepsilon_{\mathrm{gen}}.
$
The second term is bounded by Lemma~\ref{lem:bias_tv}. Combining the two bounds yields~\eqref{eq:finetune_nontrivial}.
\end{proof}

\subsubsection{Remarks on Multimodal Reasoning}

Theorem~\ref{thm:finetune_nontrivial} reveals a trade-off between modality complexity and distribution shift. This general transfer learning analysis can be instantiated in our setting of learning world models and reasoning policies. Specifically, training pairs $(x, y)$ can be instantiated as $((o_{0:i}, r_{0:{i+1}}), o_{i+1})$ for world modeling and $((o_{0:i}, r_{0:{i}}), r_{i})$ for reasoning, respectively. Crucially, the distribution shift between large-scale pre-training data and downstream tasks may differ substantially across modalities. For example, there are abundant visual demonstrations of paper folding on the Internet, whereas detailed verbal descriptions of folding dynamics are comparatively scarce. This suggests that downstream tasks should be formulated under the most appropriate observation modality for world modeling and reasoning—i.e., the modality that best aligns with pre-training data—in order to achieve stronger generalization at inference time and higher sample efficiency during post-training.

\section{Experiment Details}
\label{app:details}

\subsection{VisWorld-Eval and Training Data}
\label{app:task_details}

In this section, we elaborate on the construction of training and test data for each task in VisWorld-Eval.

\textbf{Paper folding.} This task involves folding a paper grid with varying grid sizes (3–8) and folding steps (1–4). After folding, holes of different shapes—circles, triangles, stars, diamonds, and squares—are punched into the paper. The model is then asked to predict the distribution of holes after the paper is completely unfolded, including queries such as the total number of holes, the number of holes of a specific shape, or the difference in counts between shapes. All test prompts are constructed at the highest difficulty level (grid size 8 with 4 folding steps). For SFT, we generate chain-of-thoughts using rule-based templates that follow a fixed procedure: unfold the paper step-by-step and then count the resulting holes by shape. These CoTs are then rewritten with Gemini 2.5 Pro to improve clarity and logical coherence. Under visual world modeling, we interleave reasoning steps with images of partially unfolded paper states. Under verbal world modeling, we represent intermediate states using two matrices encoding grid coverage status and hole shape at each position. Under implicit world modeling, we directly skip the explicit tracking of states from original CoTs.

\textbf{Multi-hop manipulation.} This task begins with an initial arrangement of several geometric objects (cubes, spheres, and cylinders) in various colors, rendered by Blender\footnote{\url{https://www.blender.org/}}. A sequence of text-based instructions is then provided, describing operations such as changing or swapping objects' color or shape, adding new objects, or removing existing ones. To ensure these commands can be interpreted unambiguously in a 3D space, the instructions consistently use relative spatial references, with each object uniquely identified by its combined color and shape attributes—for example: "Place a purple cylinder between the black sphere and the yellow cube." The model is asked to infer the resulting spatial layout. Queries may include the total number of objects of a specific shape, the directional relationship between two objects, or which object lies in a given direction relative to a reference object. Test prompts are constructed by varying both the number of initial objects (between 3 and 6) and the frequency (between 1 and 5) of different operation types. For SFT, chain-of-thought reasoning is generated using rule-based templates that simulate the stepwise execution of instructions before answering the final query, and these CoTs are subsequently refined with Gemini 2.5 Pro.

\textbf{Ball tracking.} This task features a red point-mass ball that moves at constant speed, reflects elastically off solid walls, and travels in the initial direction indicated by a green arrow. The model is asked to predict which numbered hole at the top of the image the ball will enter first. We generate input images with randomized resolution, initial ball position and direction, and a random number of holes (4–8). For test prompts, we select cases in which the ball trajectory reflects off at least one wall before entering a hole. For SFT, CoTs are generated by Seed 1.6, which is asked to explain the ball dynamics between adjacent frames.

\textbf{Sokoban.} Sokoban is a classic grid-based puzzle game. We generate instances with grid sizes ranging from 6 to 10, containing a single box and a target position. Test prompts are sampled from the same distribution as the training data. To construct CoTs, we use a search algorithm to compute an optimal solution path. To avoid excessively long trajectories, we render only key intermediate steps, including: (i) the player moving toward the box, (ii) pushing the box in a direction, and (iii) changing the pushing direction. To encourage reflective behavior, we additionally augment trajectories with randomized detours that involve walking into walls, reflecting, and backtracking to rejoin the optimal path. CoTs are generated by Seed 1.6, which explains the dynamics between adjacent frames. For visual world modeling, the rendered intermediate steps are interleaved with verbal CoTs. For pure verbal world modeling, these intermediate renderings are removed. For implicit world modeling, we additionally mask all explicit coordinates during CoTs with special tokens \texttt{[masked]}.

\textbf{Maze.} Maze is a classic grid-based puzzle task. We generate both training and test samples with a fixed grid size of $5\times 5$. To construct CoTs, we use rule-based templates followed by rewriting for improved naturalness. Under visual world modeling, rendered intermediate steps through points and lines are interleaved with verbal CoTs. The settings for verbal and implicit world modeling follow the same protocol as in Sokoban, with masking special tokens as \texttt{<point>[masked]</point>}.

\textbf{Cube 3-view projection.} This task considers stacks of colored cubes arranged on grids of varying sizes (3–5), with two cube colors. The input consists of one isometric view (either front-left or front-right) and two orthographic views of the stack. The question asks for the number of cubes of a specified color visible from another orthogonal view. Both the questions and answer choices account for ambiguity caused by occlusions, leading to uncertainty in the cube count. All test prompts are constructed using uniformly random grid sizes between 3 and 5. We generate CoTs using rule-based templates: the model first constructs the queried view, marks potentially occluded cubes using a third (auxiliary) color, and then counts cubes by color. These CoTs are subsequently rewritten by Gemini 2.5 Pro for improved naturalness. Under visual world modeling, we interleave reasoning steps with an image of the queried view. Under verbal world modeling, we represent intermediate views using character matrices, where different colors are encoded by different symbols.

\textbf{Real-world spatial reasoning.} For this real-world task, we directly adopt test samples from MMSI-Bench, focusing on camera–object and camera–region positional relationship questions. We construct training prompts following a pipeline similar to \citet{yang2025visual}. To obtain training CoTs, we run a visual-CoT model, which uses an SFT-trained BAGEL model for novel view synthesis as a tool. The resulting visual CoTs are subsequently filtered and rewritten by Gemini 2.5 Pro.

We summarize the training and test sample counts for each task in VisWorld-Eval, along with the corresponding original or referenced benchmarks, in Table~\ref{tab:benchmark}.

\begin{table}[htbp]
    \centering
    \caption{Overview of VisWorld-Eval and corresponding training data: features, statistics, and references.}
    \label{tab:benchmark}
    \small
    \begin{tabular}{lccccc}
        \toprule
         Task & Capability & Domain & \begin{tabular}[c]{@{}c@{}}Training\\ Samples\end{tabular} & \begin{tabular}[c]{@{}c@{}}Test\\ Samples\end{tabular} & Source/Reference \\ \midrule
         Paper folding  & Simulation& Synthetic & 2,357 & 480 & SpatialViz \cite{wang2025spatialviz} \\
         Multi-hop manipulation & Simulation & Synthetic & 2,000 & 480 & ZebraCoT \cite{li2025zebra}, CLEVR \cite{johnson2017clevr} \\
         Ball tracking  & Simulation& Synthetic & 2,254 & 1,024 & RBench-V \cite{guo2025rbench} \\
         Maze  & Simulation& Synthetic & 8,448 & 480 &  maze-dataset \cite{maze-dataset} \\ 
         Sokoban  & Simulation& Synthetic & 7,715 & 480 & GameRL \cite{tong2025gamerlsynthesizingmultimodalverifiable} \\
         Cube 3-view projection & Reconstruction & Synthetic & 2,500 & 480 & SpatialViz \cite{wang2025spatialviz} \\
         Real-world spatial reasoning  & Reconstruction & Real-world & 10,661 & 522 & MMSI-Bench \cite{yang2025mmsi} \\ \bottomrule
         \end{tabular}
\end{table}

Examples of training CoTs are presented in Figure~\ref{fig:paperfolding_data_showcase}, \ref{fig:ballgame_multihop_data_showcase}, \ref{fig:maze_sokoban_data_showcase}, \ref{fig:cube_data_showcase}, and \ref{fig:spatial_data_showcase}.

\subsection{Model Training}
\label{app:training}

We perform supervised fine-tuning (SFT) of BAGEL based on its official repository\footnote{\url{https://github.com/ByteDance-Seed/Bagel}}, using 8 GPUs,
 and conduct reinforcement learning from verifiable rewards (RLVR) using \texttt{verl}\footnote{\url{https://github.com/volcengine/verl}} on 64 GPUs. Hyperparameters for SFT and RLVR are reported in Table~\ref{tab:sft_hyperparameters} and Table~\ref{tab:rl_hyperparameters}, respectively.

\begin{table}[htbp]
    \centering
    \caption{Hyperparameters for supervised fine-tuning UMMs.}
    \label{tab:sft_hyperparameters}
    \begin{tabular}{lc}
         \toprule
         Hyperparameter & Value \\ \midrule
         Learning rate & $3\times 10^{-5}$ \\
         LR Schedule & Constant \\
         Optimizer & AdamW \\
         Loss weight (CE:MSE) & 1:10 \\
         Warm-up steps & 200 \\
         Training steps & 4000 \\
         Gen. resolution & \multicolumn{1}{l}{(256, 1024) for paper folding, cube 3-view} \\
                        & \multicolumn{1}{l}{(240, 1024) for multi-hop manipulation} \\
                        & \multicolumn{1}{l}{(256, 512) otherwise}  \\
         Und. resolution & (224, 980) \\
         Sequence length per rank & 32K \\
         Num. ranks & 8 \\
         \bottomrule
    \end{tabular}
    \vspace{10pt}
    \caption{Hyperparameters for reinforcement learning UMMs.}
    \label{tab:rl_hyperparameters}
    \begin{tabular}{lc}
        \toprule
         Hyperparameter & Value \\ \midrule
         Learning rate & $1\times 10^{-5}$ \\
         Batch size & 128 \\
         GRPO mini batch size & 32 \\
         Group size & 16 \\
         KL loss coefficient for visual gen. & 0.1 \\
         KL loss coefficient for verbal gen. & 0.0 \\
         \bottomrule
    \end{tabular}
\end{table}

\clearpage

\begin{figure}[p]
    \centering
    \includegraphics[width=\linewidth]{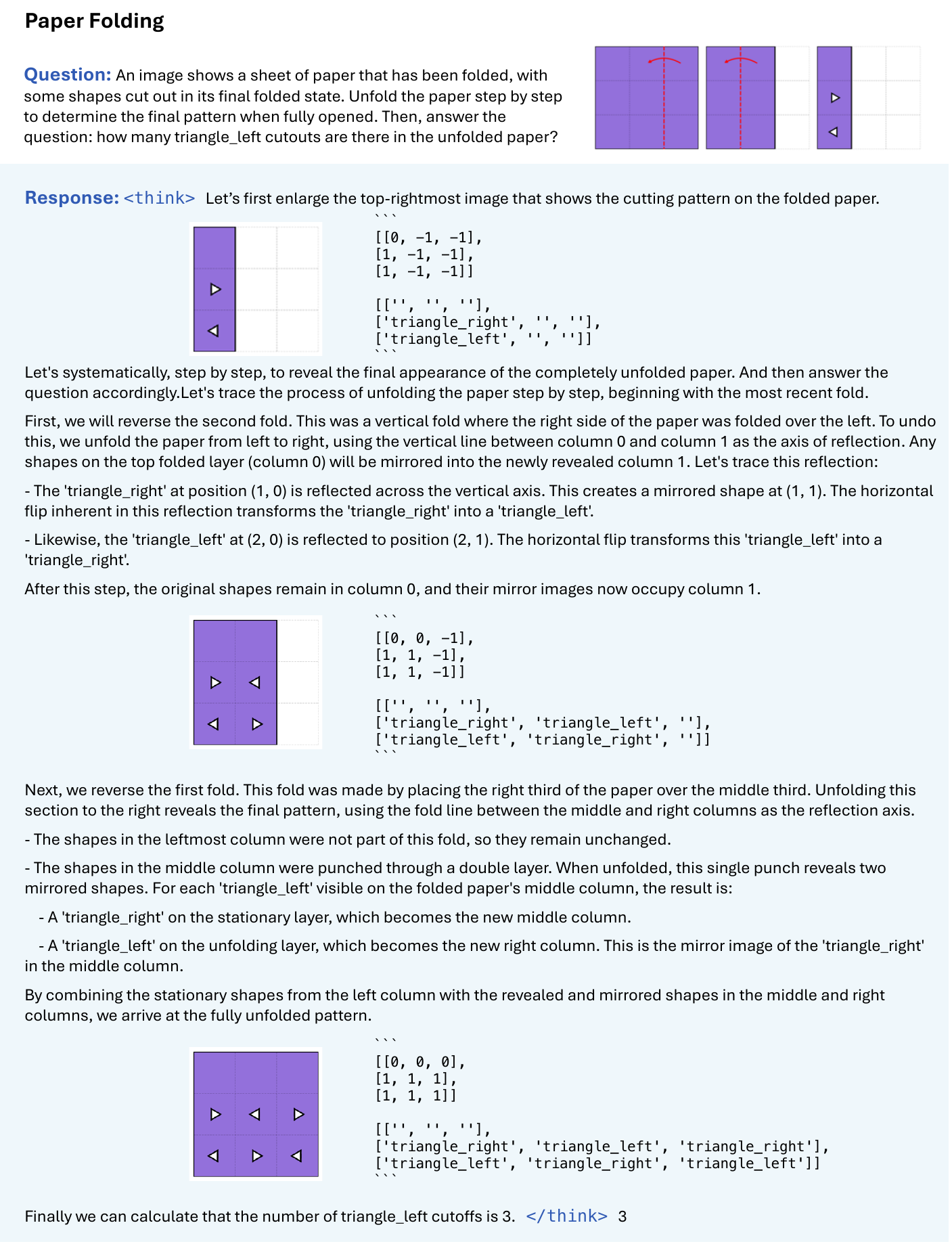}
    \caption{Examples of chain-of-thought SFT data for the paper folding task, under visual world modeling (left) and verbal world modeling (right).}
    \label{fig:paperfolding_data_showcase}
\end{figure}

\begin{figure}[p]
    \centering
    \includegraphics[width=\linewidth]{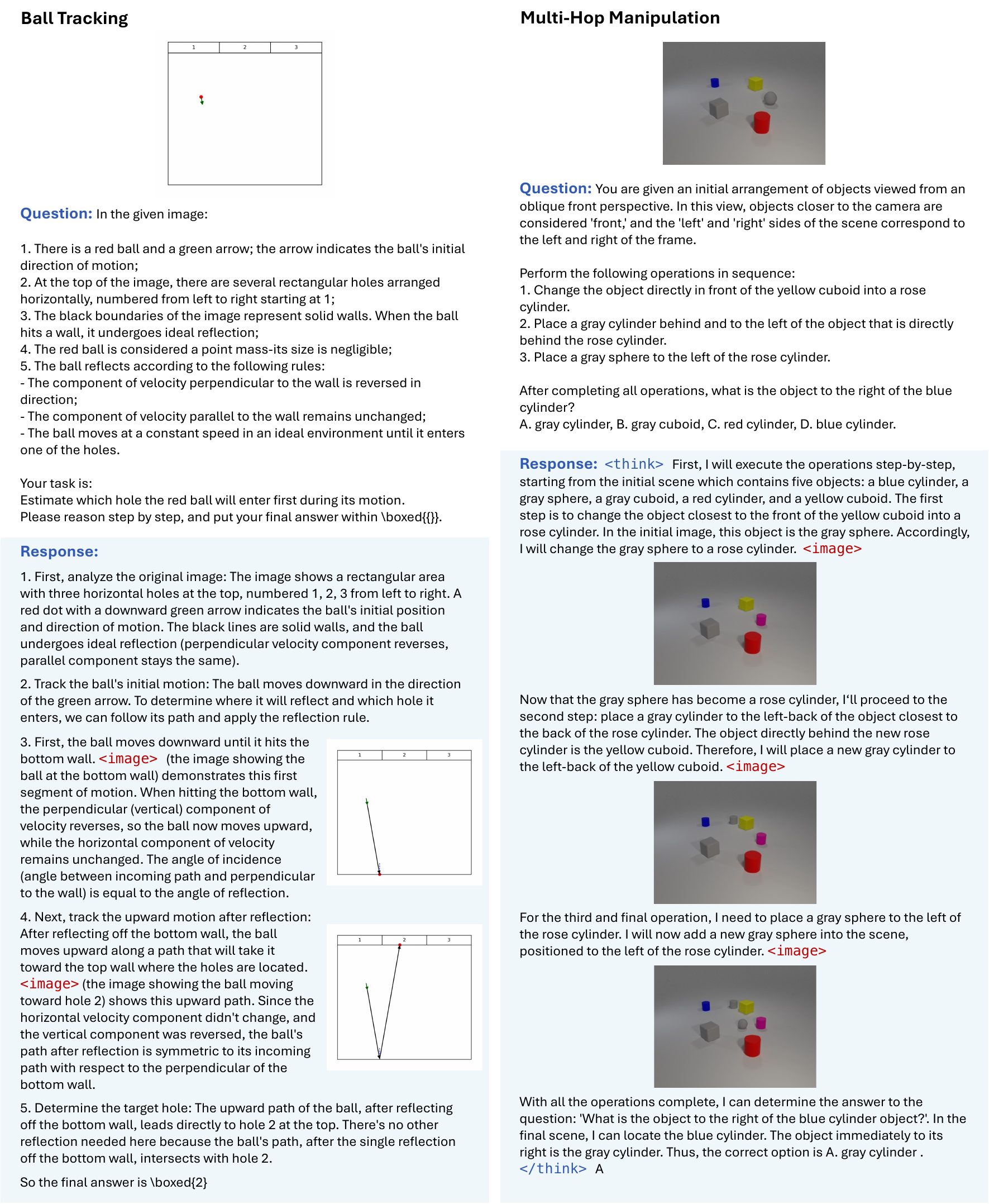}
    \caption{Examples of chain-of-thought SFT data for the ball tracking and multi-hop manipulation task.}
    \label{fig:ballgame_multihop_data_showcase}
\end{figure}

\begin{figure}[p]
    \centering
    \includegraphics[width=\linewidth]{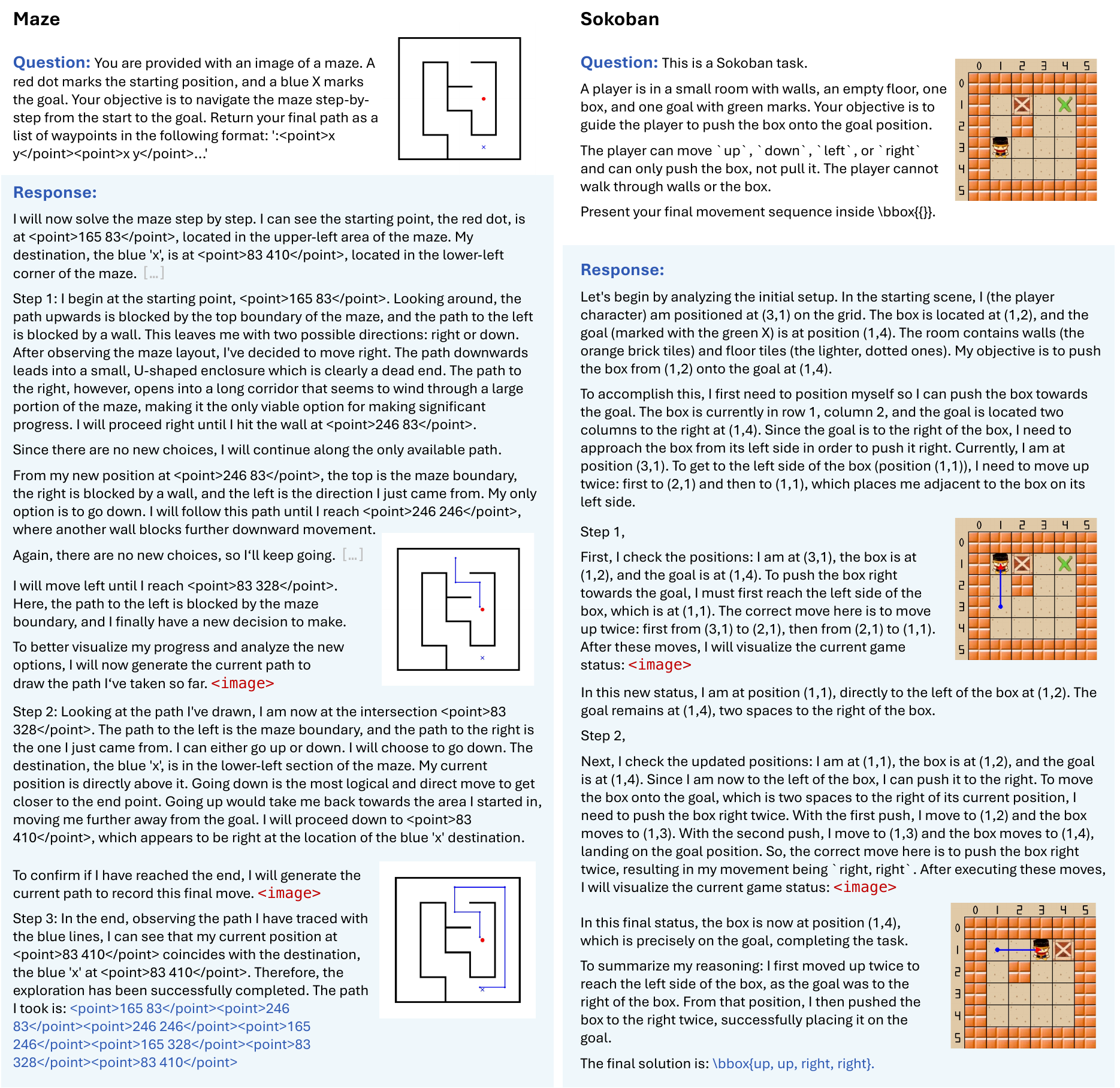}
    \caption{Examples of chain-of-thought SFT data for the maze and sokoban task.}
    \label{fig:maze_sokoban_data_showcase}
\end{figure}

\begin{figure}[p]
    \centering
    \includegraphics[width=\linewidth]{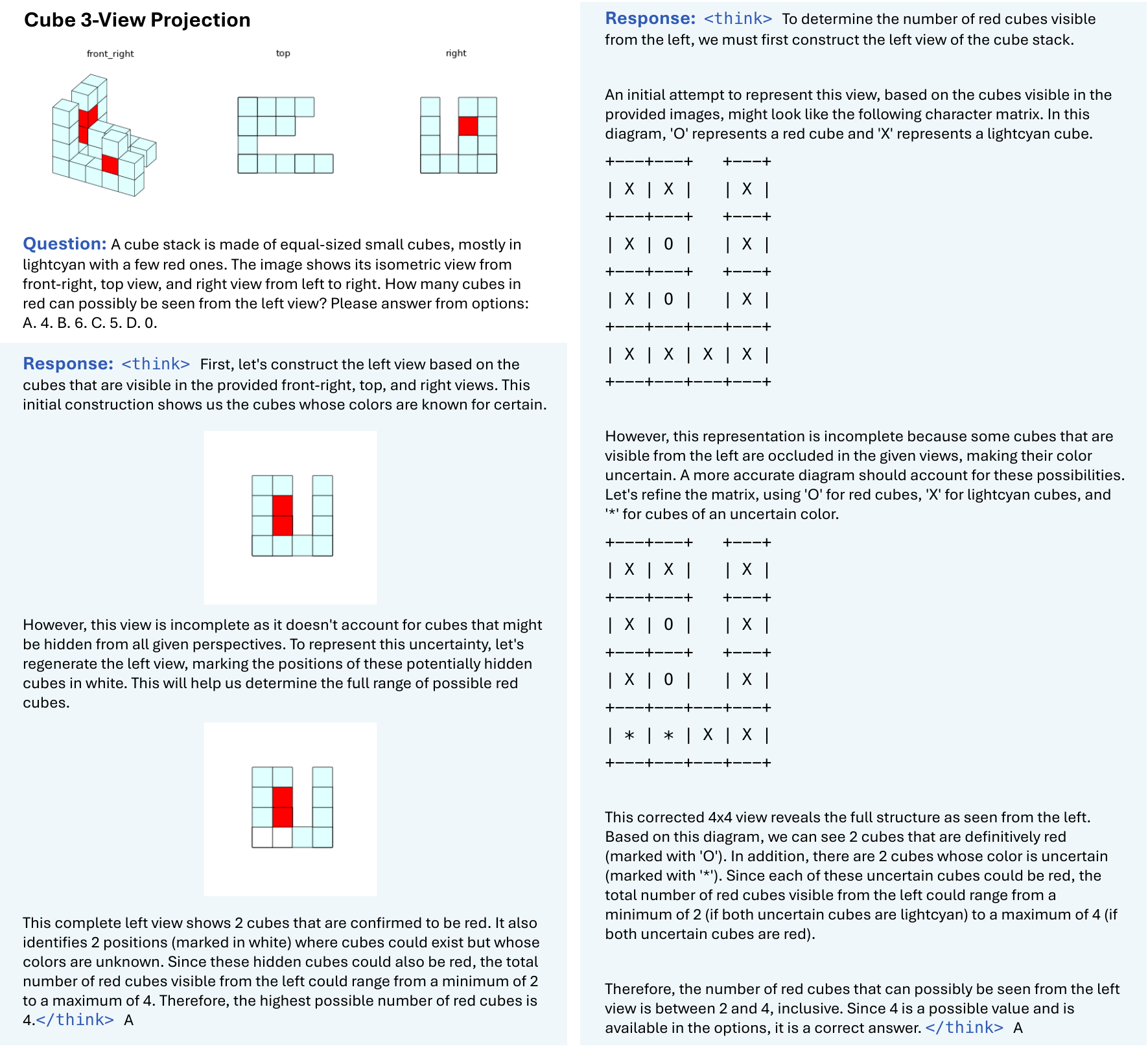}
    \caption{Examples of chain-of-thought SFT data for the cube 3-view projection task, under visual world modeling (left) and verbal world modeling (right).}
    \label{fig:cube_data_showcase}
\end{figure}

\begin{figure}[p]
    \centering
    \includegraphics[width=\linewidth]{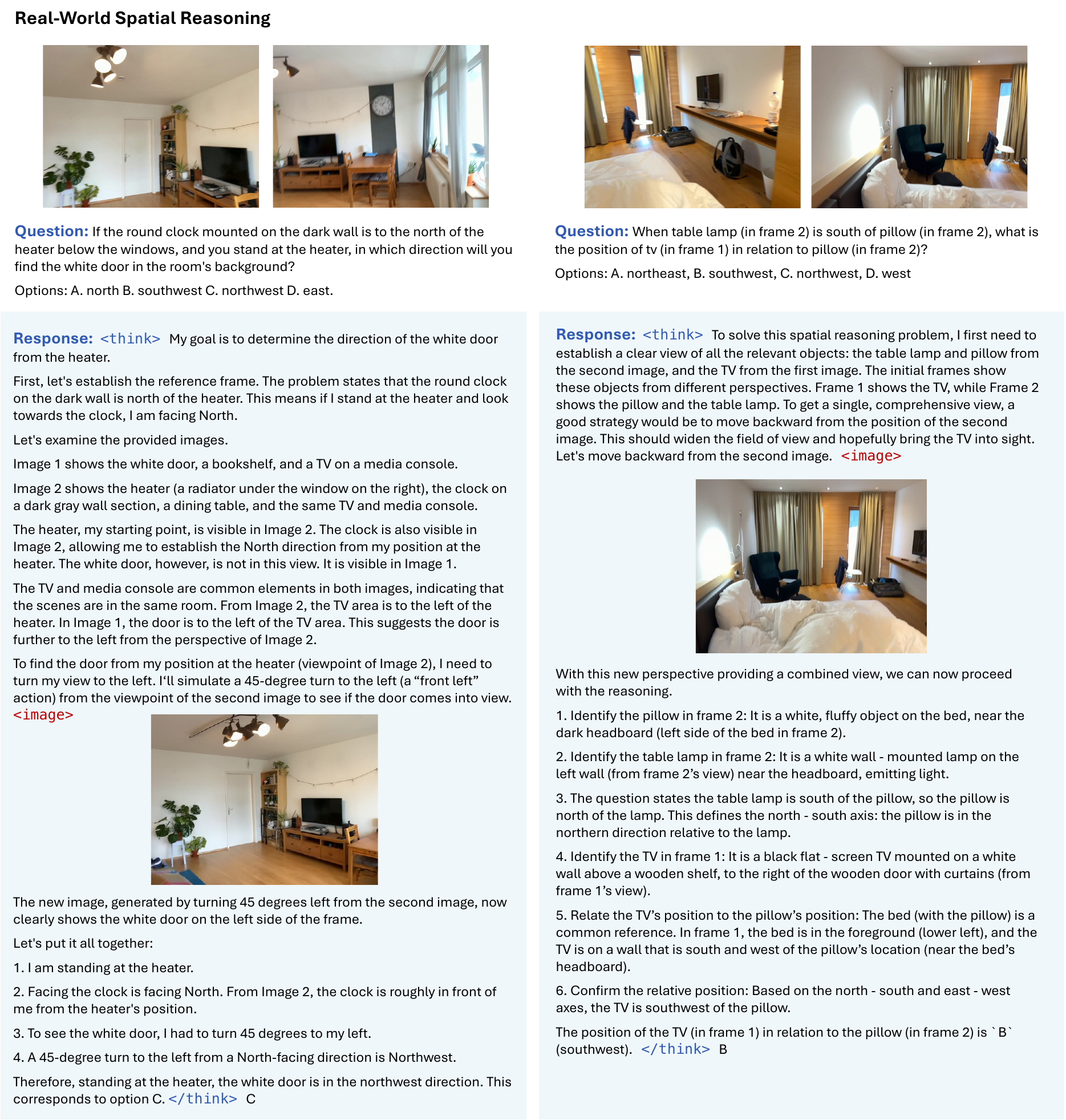}
    \caption{Examples of chain-of-thought SFT data for the real-world spatial reasoning task.}
    \label{fig:spatial_data_showcase}
\end{figure}

\clearpage

The Qwen-VL baselines are trained using \texttt{LLaMA-Factory}\footnote{\url{https://github.com/hiyouga/LLaMA-Factory}} for supervised fine-tuning (SFT) and \texttt{verl} for reinforcement learning from verifiable rewards (RLVR).

\subsection{Analytic Experiments}

\textbf{Sample efficiency.} For Figure~\ref{fig:sample_eff}, we randomly subsample either 500 or 1000 training examples. The resulting models are evaluated under two settings: (i) a hard setting with the maximum difficulty (grid size 8 and 4 folding steps, default in VisWorld-Eval), and (ii) an in-distribution setting (denoted as Normal in the figure) with randomly sampled grid sizes (3–8) and folding steps (1–4).

\textbf{Task difficulties and world model fidelity.} For Figure~\ref{fig:cube_analysis}, we generate test samples with varying cube-stack sizes (3–6), where size 6 is out-of-distribution relative to the training data. To assess world-model fidelity, we compare the generated views with the ground-truth views: for verbal world modeling, we use string pattern matching; for visual world modeling, we use Gemini 3 Pro to compare images. Since accurately inferring colors becomes particularly challenging at larger stack sizes, we evaluate only the shapes of the views and ignore color information. We also find that overall accuracy can be bottlenecked by verbal subskills (e.g., counting holes) after SFT, thus we report the accuracy of RL-trained models in Figure~\ref{fig:cube_analysis}. In contrast, RL can distract verbal world modeling capabilities, leading to invalid formats of generated symbolic matrices, thus we report world-model fidelity of SFT-trained models.

\textbf{Implicit world modeling.} For Figure~\ref{fig:implict_wm}, we supervised fine-tune (SFT) BAGEL on CoTs with implicit world modeling, in which all explicit point coordinates are replaced by the placeholder token sequence \texttt{<point>masked<point>}. After training, we extract the hidden representations at the position of the token \texttt{masked} from each transformer layer. We then split the extracted representations from different CoTs into training and validation sets with an 8:2 ratio and train a two-layer MLP (hidden size 4096) to predict the ground-truth point coordinates. Since all samples are $5\times 5$ mazes, we formulate coordinate prediction as two 5-way classification tasks (for $x$ and $y$, respectively). We compute classification accuracy for each coordinate and report the average of the two.

\section{Extended Experimental Results}
\label{app:exp}

\subsection{Full Results on MMSI-Bench}
\label{app:mmsi}

We report all scores on positional relationship tasks of MMSI-Bench in Table~\ref{tab:mmsi}.

\begin{table}[htbp]
    \setlength{\tabcolsep}{4pt}
    \caption{Full results of SFT-trained UMMs on MMSI-Bench positional relationship tasks.}
    \label{tab:mmsi}
    \centering
    \begin{tabular}{lccccccc}
\toprule
                        & \multicolumn{7}{c}{MMSI-Bench (Positional Relationship)}                    \\
Models                  & Cam.-Cam. & Obj.–Obj. & Reg.–Reg. & Cam.–Obj. & Obj.–Reg. & Cam.–Reg. & Overall\\ \midrule
Implicit WM   &           33.1  &     31.2   &  31.8     & 46.5 &   29.1  &  37.3 &  34.8 \\ 
Visual WM &            29.6  &    29.5   &  31.6 & 60.9  &   25.8   & 54.4 & 38.4 \\
\bottomrule
\end{tabular}
\end{table}

\subsection{Additional Qualitative Evaluation}

We provide additional qualitative evaluation of trained UMMs' reasoning, particularly failure cases.

\textbf{Real-world spatial reasoning.} As shown in Figure~\ref{fig:mmsi_failure_case}a, reasoning with implicit world modeling is prone to hallucinations. In contrast, visual generation (Figure~\ref{fig:mmsi_failure_case}b) yields more faithful world models, but still suffers from insufficient quality, including blurring and corrupted details. Moreover, we find that current VLMs and UMMs continue to exhibit limited understanding of positions and directions across different viewpoints. We expect that stronger base models and better-curated post-training data will enable more effective use of visual world models for spatial reasoning in future work.

\textbf{Paper folding.} As illustrated in Figure~\ref{fig:paperfolding_failure_case}, verbal reasoning about geometric symmetry is prone to hallucinations, leading to inaccurate verbal world modeling. In contrast, visual world models, benefiting from stronger prior knowledge, generate correct intermediate unfolding steps even in the presence of erroneous verbal reasoning.

\textbf{Cube 3-view projection.} As shown in Figure~\ref{fig:cube_failure_case}, visual world models are able to approximately generate novel views of cube stacks even in the challenging out-of-distribution setting with an unseen stack size of 6, indicating strong prior knowledge of spatial transformations. Nevertheless, overall task performance remains limited by subtle shape-generation errors (Figure~\ref{fig:cube_failure_case}b,d) and inaccurate color inference (Figure~\ref{fig:cube_failure_case}c). We expect these issues to be alleviated through improved post-training and stronger base models.

\begin{figure}[ptb]
    \centering
    \includegraphics[width=\linewidth]{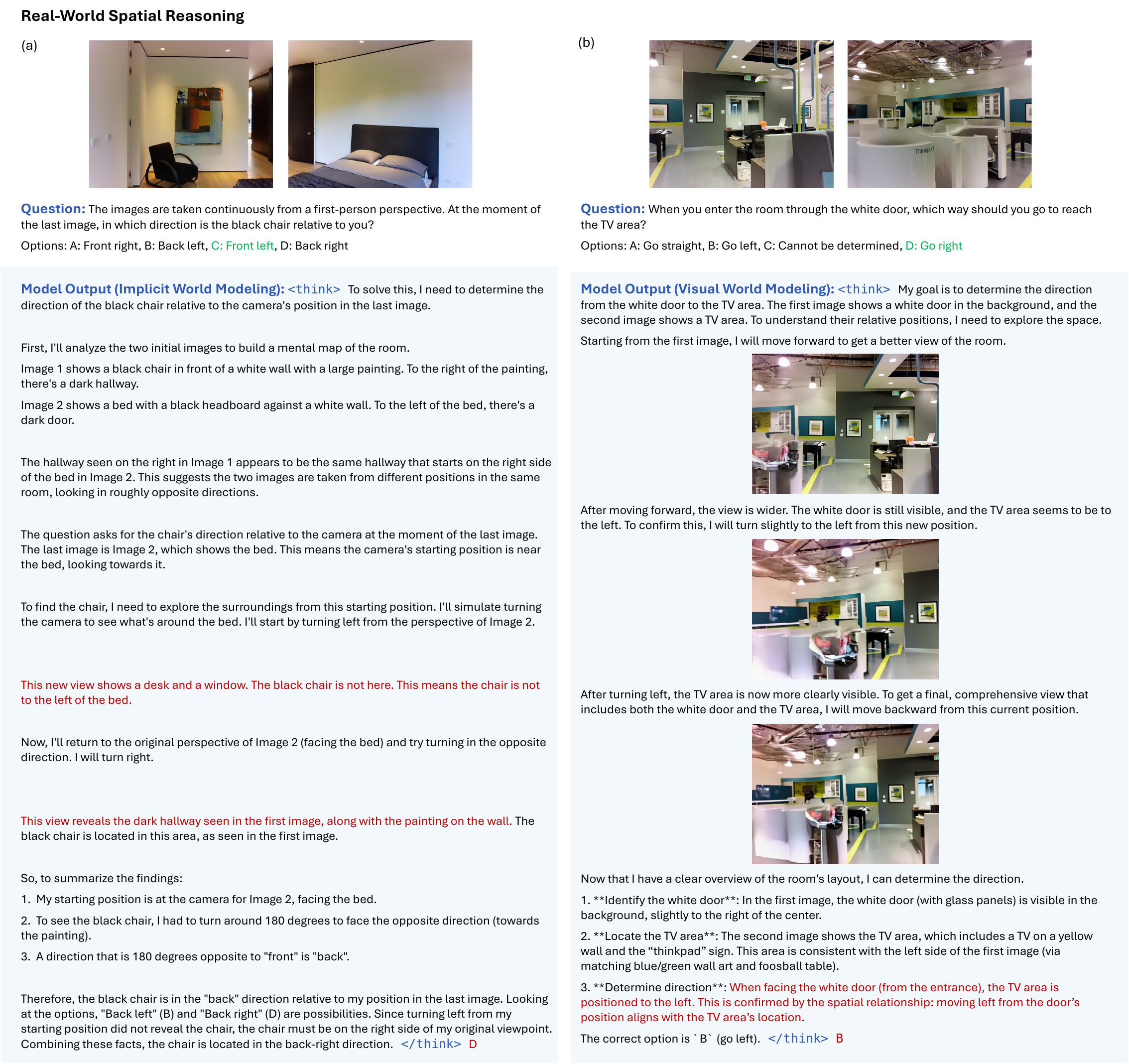}
    \caption{Showcases of reasoning generateed by post-trained UMMs in the real-world spatial reasoning task. We highlight hallucinations or incorrect reasoning steps in red.}
    \label{fig:mmsi_failure_case}
\end{figure}

\begin{figure}[ptb]
    \centering
    \includegraphics[width=\linewidth]{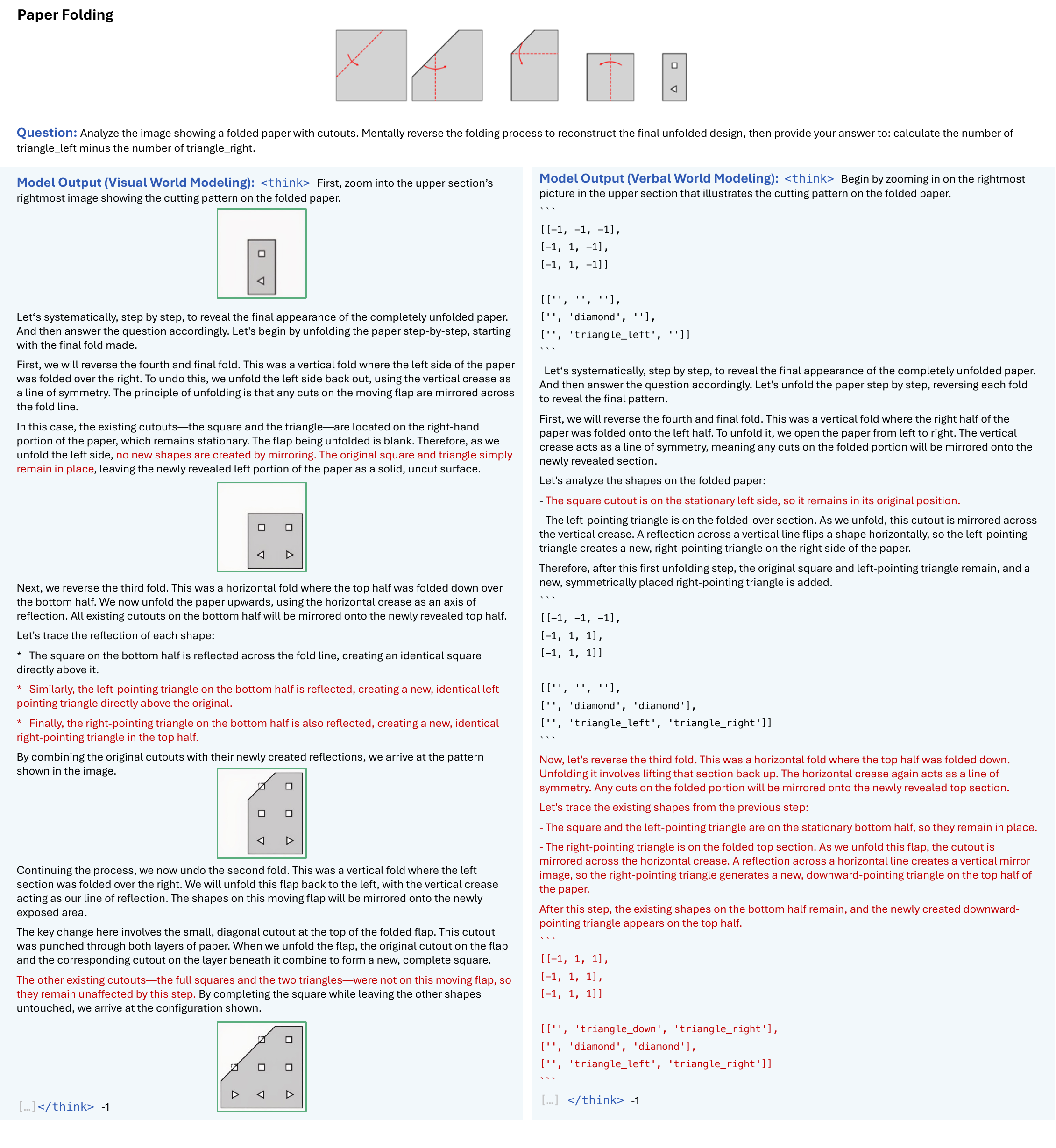}
    \caption{Showcases of reasoning generated by post-trained UMMs in the paper folding task. We highlight hallucinations or incorrect reasoning steps in red, but also mark correctly generated visual unfolding intermediate steps with green borders.}
    \label{fig:paperfolding_failure_case}
\end{figure}

\begin{figure}[ptb]
    \centering
    \includegraphics[width=\linewidth]{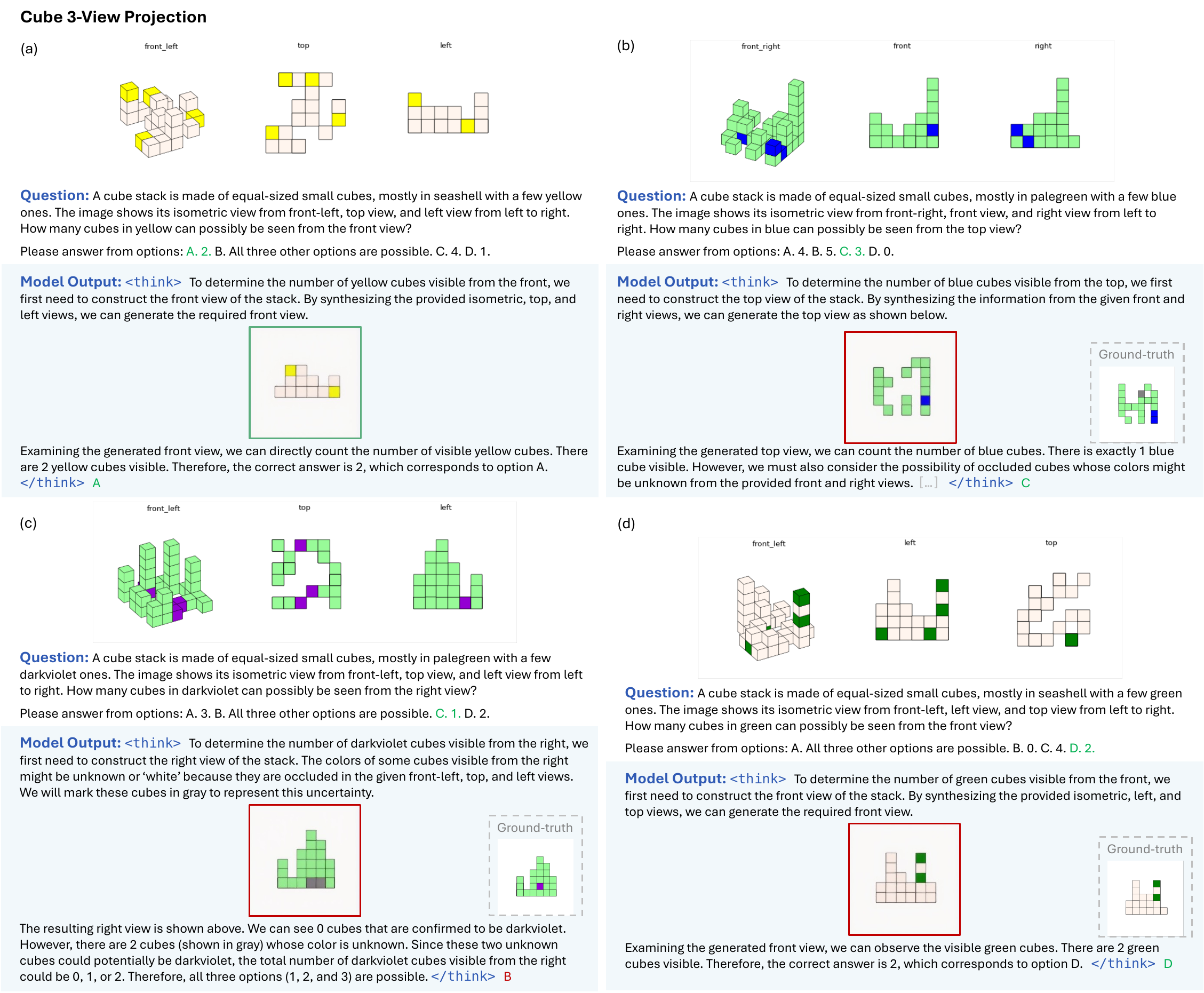}
    \caption{Showcases of reasoning generated by post-trained UMMs in the paper folding task. We mark correct and incorrect generated cube views with green and red borders, respectively. For incorrect generations, the corresponding ground-truth views are provided for reference (note that these are shown only for readers and are never provided to the models during reasoning).}
    \label{fig:cube_failure_case}
\end{figure}

\end{document}